\documentclass{article}
\PassOptionsToPackage{numbers, compress}{natbib}

\usepackage[preprint]{neurips_2026}

\usepackage{microtype}
\usepackage{graphicx}
\setkeys{Gin}{width=0.9\linewidth} 
\usepackage{subcaption}
\usepackage{booktabs} 

\usepackage{hyperref}

\usepackage{tabularx}

\usepackage{comment}
\usepackage{amsmath}
\usepackage{amssymb}
\usepackage{mathtools}
\usepackage{amsthm}
\usepackage{xcolor}         
\usepackage{lipsum}

\usepackage{titlesec}
\usepackage[titletoc,toc,title]{appendix}

\usepackage[capitalize,noabbrev]{cleveref}

\theoremstyle{plain}
\usepackage{amsthm, mdframed, xcolor}   

\definecolor{boxrule}{RGB}{136,136,136}
\definecolor{boxbg}{RGB}{247,247,247}

\newmdtheoremenv[
  linecolor=boxrule,        
  backgroundcolor=boxbg,   
  linewidth=0pt,
  topline=false, rightline=false, bottomline=false,
  innerleftmargin=10pt, innerrightmargin=10pt,
  innertopmargin=6pt, innerbottommargin=6pt,
]{theorem}{Theorem}

\newmdtheoremenv[
  linecolor=boxrule,
  backgroundcolor=boxbg,
  linewidth=0pt,
  topline=false, rightline=false, bottomline=false,
  innerleftmargin=10pt, innerrightmargin=10pt,
  innertopmargin=6pt, innerbottommargin=6pt,
]{proposition}{Proposition}

\theoremstyle{definition}

\theoremstyle{remark}

\usepackage[textsize=tiny]{todonotes}

\usepackage[shortlabels]{enumitem}
\usepackage{autobreak}
	
\newcommand{\indep}{\perp \!\!\! \perp}
\usepackage[dvipsnames]{xcolor}
\usepackage{dsfont}

\usepackage[utf8]{inputenc} 
\usepackage[T1]{fontenc}    
\usepackage{hyperref}       
\usepackage{url}            
\usepackage{booktabs}       
\usepackage{amsfonts}       
\usepackage{nicefrac}       
\usepackage{microtype}      

\usepackage{wrapfig}
\title{Diff-CA: Separating Common and Salient Factors with Diffusion Models}

\author{%
 Michaël Soumm \\
INRIA at Univ. Grenoble Alpes\\
  \texttt{michael.soumm@inria.fr} \\
   \And
    Alexandre Fournier Montgieux \\
   CEA List, Palaiseau \\
 \texttt{alexandre.fourniermontgieux@cea.fr} \\
   \AND
   Yunlong He\\
Télécom Paris, Institut Polytechnique de Paris \\
   \texttt{yunlong.he@telecom-paris.fr} \\
   \And
   Pietro Gori \\
   Télécom Paris, Institut Polytechnique de Paris \\
   \texttt{pietro.gori@telecom-paris.fr} \\
   \And
   Alasdair Newson  \\
   Télécom Paris, Institut Polytechnique de Paris \\
  \texttt{alasdair.newson@telecom-paris.fr} \\
}
\usepackage{ragged2e}

\begin{document}

\maketitle

\begin{abstract}

Contrastive Analysis aims to separate factors that are common between two data distributions from those that are salient to only one of them. 
Existing contrastive methods are based on generative models (e.g., VAEs or GANs) that often suffer from limited reconstruction and image quality, which hampers effective latent factor separation and limits their applicability to high-fidelity image generation and edition.
We propose a novel conditioning framework for diffusion models that enables contrastive decomposition without compromising generation quality. We first train a prompt-free, image-conditioned diffusion model, and then learn to decompose the conditioning into a common and a salient factor, using weak supervision. We prove that the additive contrastive factorization, commonly assumed in prior work, is identifiable under mild conditions. This factorization enables targeted operations by swapping or interpolating only the salient factor.
\end{abstract}


\section{Introduction}

Learning common and salient information between data distributions is an important task in many domains of representation learning, such as multi-view \cite{tian2020contrastive,federici2020multi,wang2016deep,richard_shared_2021,dufumier_integrating_2023}, multi-modal \cite{dufumier2025what} representation learning, domain adaptation \cite{lee2021dranet}, subgroups discovery \cite{louiset_ucsl_2021,louiset_automatic_2026} and disentanglement \cite{sanchez2020disentangled}. In this article, we focus on 
\textbf{Contrastive Analysis (CA)}, which is the problem of separating what is \emph{common} to two data distributions from what is \emph{salient} to one of them.
In this setting, we observe two unpaired image distributions: a \emph{background} distribution $p_{X|Y=0}$ and a \emph{target}  distribution $p_{X|Y=1}$, where $X$ represents an image and $Y\in\{0,1\}$ indicates which distribution it belongs to (\textit{i.e.,} a weak binary signal).
The target distribution contains the additional, salient factor, which is absent in the background.




\begin{figure}[t]
    \centering
     \includegraphics[width=\linewidth]{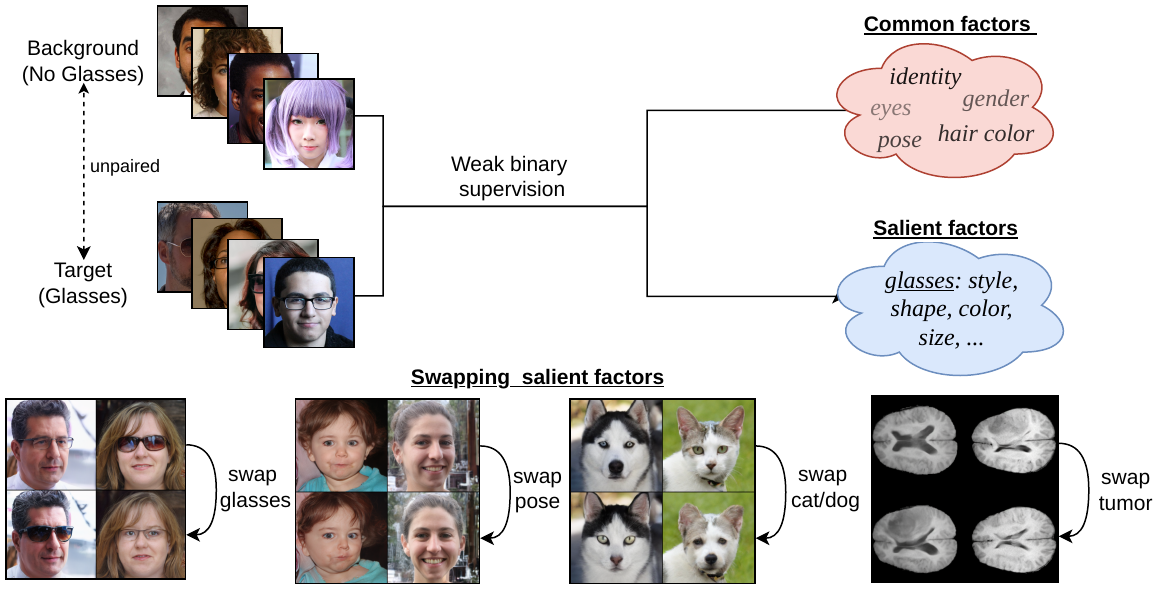}
    \caption{Contrastive Analysis separates \textbf{common} from \textbf{salient} factors between two unpaired data distributions using only weak binary supervision (\textit{i.e.,} dataset-level). The learned common and salient latent spaces can be manipulated for editing or knowledge discovery, for instance.}
    \label{fig:teaser}
\end{figure}

Using as example Fig. \ref{fig:teaser}, where the two datasets contain face images without glasses ($Y=0$) and with ($Y=1$) glasses, the goal of CA is to estimate a representation that decomposes images into common $C$ (\textit{e.g.,} identity, gender) and salient factors $S$ (\textit{e.g.,} glasses style and color). Other examples include medical images with or without a pathology or tumor, and images of products with or without defects, such as metallurgical anomalies (e.g. cracks, fissures) in industrial inspection. Binary group membership (healthy/pathological or pass/fail) are labels that are routinely assigned in hospitals or in industry. However, they might be obtained through intrusive tests (e.g., biopsy, blood tests) and/or be based on complex, salient patterns that can be difficult to capture using text prompts or that are not necessarily known in advance. 
The identification of such patterns could enable diagnosis via non-invasive imaging and a better understanding of the disease. 
For all these applications, CA provides a principled framework for automatically discovering, isolating, and modeling the salient $S$ and common $C$ factors, enabling targeted operations (swapping, interpolation, synthetic/counterfactual generation, or anomaly quantification) without fine-grained annotations or text prompts.

In this work, \textbf{we propose a Contrastive Analysis approach based on latent diffusion models}.
Our method applies CA directly to the conditioning token sequence $Z$ of a latent diffusion architecture.
We learn a decomposition $Z \mapsto (\hat{Z}_C, \hat{Z}_S)$ such that $\hat{Z}_C$ captures the common factor shared across distributions ($C$), while $\hat{Z}_S$ isolates the salient factor specific to the target distribution ($S$).
This yields an effective CA-aligned intervention mechanism: swap, remove, or scale $\hat{Z}_S$ while keeping $\hat{Z}_C$ fixed. 
CA thus provides an alternative to text-driven diffusion editors, which often rely on prompt/inversion/attention heuristics
\cite{hertzprompt,brooks2023instructpix2pix,mokady2023null,parmar2023zero} that can be sensitive to prompt/guidance choices, introduce unintended changes
, 
or that cannot be applied when salient patterns are difficult to describe or are unknown.
We further discuss the limitations of these editors in \autoref{sec:LLMs_failure}.

Furthermore, we provide a theoretical analysis of the common-salient decomposition based on the additive decomposition: $Z=\hat{Z}_C+\hat{Z}_S$, which is commonly assumed in CA \cite{abid2019contrastive,carton2024double,louiset2024sepvae,louiset2024separating,weinberger2022moment,he_learning_2025}. We also prove its identifiability under mild conditions, showing that the additive decomposition in the token space enables the recovery of the true underlying common and salient factors.

Our contributions are as follows: 
\begin{itemize}[leftmargin=*,topsep=2pt,itemsep=1pt]
    \item \textbf{Theoretical Analysis for CA}: Section \ref{sec:theory} provides a rigorous theoretical framework for Contrastive Analysis in the context of generative models. We derive a lower bound on controllability based on mutual information (Prop.~\ref{prop:control}) and prove that under an additive structural assumption, the true common and salient factors are uniquely identifiable from weak supervision (Thm.~\ref{thm:identifiability}).
    \item \textbf{High-fidelity latent space for diffusion models}:
     We construct in Sec.~\ref{subsec:conditioning_pipeline} a conditioning latent space for diffusion models to match the structural assumptions of Sec~\ref{sec:theory}. By training a Cross-Query encoder with DINOv3 features and a novel Color Token, we obtain a latent space suitable for both {high fidelity reconstruction} and {generated image attribute manipulation}.
    \item \textbf{CA on diffusion conditioning tokens}: In Sec~\ref{subsec:separator_training}, we introduce our separator architecture, Diff-CA, and translate the theoretical constraints into practical training losses applied to the token latent space. As evaluated in Sec.~\ref{sec:experiments} across multiple domains, Diff-CA isolates and edits salient attributes while effectively preserving common factors (e.g., identity, gender), achieving state-of-the-art reconstruction and swapping performance compared to existing CA baselines.
\end{itemize}


\section{Related Works}
\label{sec:related_work}
\paragraph{Contrastive analysis.}
Classic CA formulations include contrastive PCA \cite{abid2018exploring}, and VAE based architectures with different regularizations to enforce a common/salient latent separation \cite{abid2019contrastive,weinberger2022moment,louiset2024sepvae,benaim2019domain,bousmalis2016domain,sanchez2020disentangled,kleinman2023gacskorner}. More recent CA work explores adversarial generators to improve sample quality while retaining a common/salient decomposition \cite{gonzalez2018image,carton2024double}, or representation-level separation without relying on high-fidelity generation \cite{louiset2024separating}. A recurring limitation is that imperfect reconstruction or synthesis prevents latent factors from being effectively separated. 

\noindent \textbf{Diffusion conditioning as a controllable interface.}
Diffusion models provide high-fidelity synthesis \cite{ho2020denoising}, and latent diffusion allows for external conditioning through cross-attention over \emph{conditioning embeddings/tokens} \cite{rombach2022high}. Several works expand conditioning pathways to improve control, e.g. by adding dedicated conditioning branches or image-derived prompts \cite{zhang2023adding,ye2023ip}. In parallel, other lines of work aim to expose more structured latent variables for diffusion to enable semantic manipulation \cite{preechakul2022diffusion,kwondiffusion,park2023understanding,jeong2024training}. Our work is complementary: instead of proposing a new editing heuristic or a new global latent, we bring CA to diffusion by decomposing the image-conditioning tokens themselves into common $C$ and salient $S$ under the target/background weak supervision signal.

\noindent \textbf{Diffusion image editing.}
A large body of diffusion editing methods performs user-specified edits via prompts, attention control, and/or inversion \cite{hertzprompt,mokady2023null,brooks2023instructpix2pix,ruiz2023dreambooth}. These approaches can produce impressive edits, but they are not designed to \emph{identify} and \emph{separate} dataset-level common and salient generative factors between two distributions, under weak supervision.  Instead, they steer generation through prompt- or inversion-dependent mechanisms. 

\noindent \textbf{Disentanglement.} Conceptually, our goal is also related to disentanglement, which aims to produce changes of semantically meaningful attributes (e.g., gender) by altering a \textit{single} latent component. Disentanglement is known to be ill-posed without additional constraint or supervisions \cite{higgins2017beta,chen2016infogan,locatello2019challenging} and its objective is complementary to the one of CA. Indeed, CA deals with the latent separation of the generative factors $C$ and $S$, which may be further disentangled (\textit{i.e.,} one factor per attribute),
using prior information about salient/common attributes (left as future work in this article). 


\section{Theoretical Analysis of CA decomposition}
\label{sec:theory}

\begin{wrapfigure}[17]{R}{0.5\textwidth}
\vspace*{-0.5cm}
    \centering
\includegraphics[width=0.49\textwidth,trim={0.15cm 0 0.5cm 0},clip]{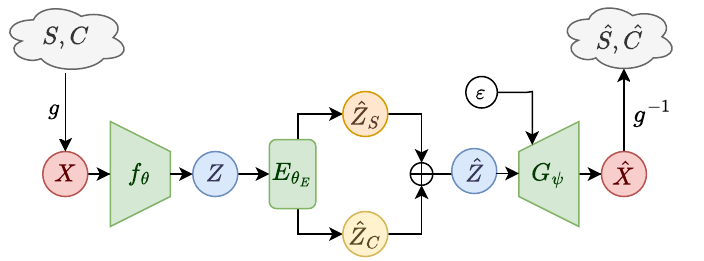}
    \caption{Conceptual view of our decomposition. Factors {\color{gray} $(S, C)$} generate an observed image {\color{BrickRed} $X$}. We learn a feature extractor {\color{ForestGreen} $f_\theta$} that projects the images into latents {\color{NavyBlue} $Z$}, and an encoder  {\color{ForestGreen} $E_{\theta_E}$} that splits  {\color{NavyBlue} $Z$},  into {\color{BurntOrange} ($\hat{Z}_S, \hat{Z}_C$)} aimed at representing {\color{gray} $(S, C)$}. {\color{BurntOrange} ($\hat{Z}_S, \hat{Z}_C$)} sum to  {\color{NavyBlue} $\hat{Z}$}, which conditions a diffusion model {\color{ForestGreen} $G_\psi$} aimed at generating an approximation {\color{BrickRed} $\hat{X}$} of {\color{BrickRed} $X$}.} 
    \label{fig:factors}
\end{wrapfigure}

In this section, we introduce the additive structural assumptions of the CA framework. Furthermore, we present \textbf{theoretical analysis} to show that these assumptions lead to an \textbf{identifiable CA decomposition}. 

We consider image-label pairs $(x,y)\sim p_{X, Y}$ in $\mathcal{X}\times\{0,1\} $, where $y$ is a binary indicator of the absence~/~presence of the target attributes. We refer to samples from $p_{X| Y=0}$ and $p_{X|Y=1}$ as negative (or \textit{background}) and positive (or \textit{target}) samples, respectively. We assume that $X$ is generated by an unknown, deterministic and invertible process $g$ applied to two \textit{unobserved} and \textit{independent} generative factors $S$ and $C$, such that $g:\mathcal{S}~\times~\mathcal{C}~\to~\mathcal{X}$, where $S\in\mathcal{S}\subset\mathbb R^{d_S}$ is the \emph{salient} factor and $C\in\mathcal{C}\subset\mathbb R^{d_C}$ the \emph{common} one:
\begin{align}
\label{eq:generative_process}
X = \begin{cases}
        g(\ 0\ , C) & \text{if } Y=0 \\
        g(\smash{\underbrace{S}_{\neq0}},C) & \text{if } Y=1,
    \end{cases} \quad \text{and}\quad   C \indep S.
\end{align}
\vspace{1mm}
\hspace{-1mm}

The factor $S$ encodes the target attributes, describing their characteristics such as shape, texture, or style. Since $C$ should fully represent background samples ($Y=0$), we model the \textit{absence} of salient information in $X$ by fixing $S=0$, as is commonly done in CA \cite{abid2018exploring,abid2019contrastive,carton2024double,louiset2024sepvae,louiset2024separating,weinberger2022moment}.

Our goal is thus to learn, from images $X$ and weak binary labels $Y$ alone, \textbf{a representation $(\hat{Z}_S, \hat{Z}_C)$ that mirrors  the $(C,S)$ decomposition}, where $\hat{Z}_S$ models all of the variability of the salient factor $S$ (not simply its absence~/~presence), while $\hat{Z}_C$ models the common factors $C$.


\subsection{Additive Structure of the Latent Space and Identifiability}
\label{sec:structural_assumption}
\paragraph{Additive decomposition.} Before aiming for a decomposition of salient and common information, we need to learn a conditioning representation ${Z= f_\theta(X) \in \mathbb{R}^d}$ that accurately captures both common and salient factors. 
To make the decomposition of $S$ and $C$ tractable, we postulate that the encoder $f_\theta$ maps the complex non-linear interactions of the pixel space into a structured \emph{linear additive latent space}, inspired by previous work on CA \cite{weinberger2022moment, louiset2024sepvae,louiset2024separating, carton2024double, locatello2019challenging}. 
Formally, we assume the existence of mappings $\varphi_S: \mathcal{S} \to \mathbb{R}^d$ and $\varphi_C: \mathcal{C} \to \mathbb{R}^d$ such that the conditioning $Z$ decomposes as 
\begin{equation}
    \text{Im}(f_\theta) = \text{Im}(\varphi_S) \oplus \text{Im}(\varphi_C) \quad \text{  i.e. }  Z = Z_S + Z_C,
   \label{eq:additive_struct}
\end{equation}
where $\oplus$ denotes a direct sum and we define the \emph{encoded factors} as $Z_S := \varphi_S(S)$ and $Z_C := \varphi_C(C)$. Since $S \indep C$ by definition, $Z_S$ and $Z_C$ are also statistically independent. While high-level semantic linearity is a strong assumption, we empirically validate in Sec.~\ref{sec:finegrain} that our DINOv3-based token representation supports such vector arithmetic.
Importantly, weak supervision provides  a \textit{pinning constraint}. In the background distribution ($Y=0$), the salient factor is absent ($S=0$). We require the salient mapping to vanish at this reference point:\footnote{This is up to a shift constant: we can simply shift $\varphi_S$ and $\varphi_C$ by a constant to satisfy this requirement.} $S = 0 \implies Z_S = 0$.

\paragraph{Identifiability of the Decomposition.}
\label{sec:identifiability}
The direct sum assumption implies that a unique decomposition of $Z$ exists, but we do not know the subspaces for $Z_S$ and $Z_C$ that span it. Therefore, we cannot simply project $Z$, we must learn these subspaces. To this end, we introduce a separator encoder network to separate the two: $E_{\theta_E}(Z) := (\hat{Z}_S, \hat{Z}_C)$, aimed at inferring $(Z_S, Z_C)$ given $Z$. We refer to $ (\hat{Z}_S, \hat{Z}_C)$ as latent \emph{codes} (learned) to distinguish them from true common/salient \emph{factors} (theoretical).

A key question is under what theoretical conditions we can guarantee the recovery of the true common/salient factors, avoiding spurious decompositions. The following theorem establishes that three structural assumptions are sufficient to ensure identifiability.

\begin{theorem}[Identifiability of Additive Factors]
\label{thm:identifiability}
Assume the conditioning $Z$ is generated by the direct sum of independent factors $Z_S$ and $Z_C$, where $Z_S$ vanishes for background samples ($Y=0$). If a learned decomposition $ Z \mapsto (\hat{Z}_S, \hat{Z}_C)$ satisfies:
\begin{enumerate}[leftmargin=1cm, topsep=0pt]
    \item \textbf{Additive Reconstruction:} The learned codes sum to the input: $\hat{Z}_S + \hat{Z}_C = Z$ 
    \item \textbf{Pinning:} The salient code vanishes only for background samples: $Y=0 \Longleftrightarrow \hat{Z}_S = 0$.
    \item \textbf{Independence:} The learned codes are independent: $\hat{Z}_S \indep \hat{Z}_C$.

\end{enumerate}
Then, the learned codes uniquely identify the true encoded factors:
\begin{equation}
    \hat{Z}_S = Z_S \quad \text{and} \quad \hat{Z}_C = Z_C
\end{equation}
\end{theorem}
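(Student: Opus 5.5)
The plan is to study the discrepancy between the learned codes and the true encoded factors. Define $\Delta := \hat{Z}_S - Z_S$. Additive Reconstruction together with $Z = Z_S + Z_C$ from \eqref{eq:additive_struct} gives $\Delta = Z_C - \hat{Z}_C$, so $\Delta$ measures both errors at once. The theorem is then equivalent to showing $\Delta = 0$ almost surely, and I would handle the two populations $Y=0$ and $Y=1$ separately.

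\textbf{Background samples.} For $Y=0$ the model gives $S=0$, hence $Z_S=0$ by the pinning convention of Sec.~\ref{sec:structural_assumption}. The Pinning hypothesis (the forward implication) gives $\hat{Z}_S = 0$. Therefore $\Delta = 0$ on the background, and moreover $\hat{Z}_C = Z_C = Z$ there. This step only uses the definitions.

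\textbf{Target samples.} For $Y=1$ I would use independence, working conditionally on $Y=1$ (and I would state explicitly that the Independence hypothesis is read within the target population). The argument has three steps.
\begin{enumerate}
    \item \emph{Law of $\hat{Z}_C$.} Since $C \indep S$ and $C$ is generated identically in both groups, $Z_C$ has the same law under $Y=0$ and $Y=1$. I would argue that $\hat{Z}_C$ inherits this: being the common code, its law is the background law of $Z$, which by the previous step is the law of $Z_C$. If this invariance is not directly available, I would add it as the natural reading of ``common'' (i.e.\ $\hat{Z}_C \indep Y$).
    \item \emph{Law of $\hat{Z}_S$.} By the factorization of characteristic functions,
    \[
    \phi_{Z_S}(t)\,\phi_{Z_C}(t) \;=\; \phi_Z(t) \;=\; \phi_{\hat{Z}_S}(t)\,\phi_{\hat{Z}_C}(t).
    \]
    With $\phi_{\hat{Z}_C} = \phi_{Z_C}$, this yields $\phi_{\hat{Z}_S} = \phi_{Z_S}$ wherever $\phi_{Z_C} \neq 0$. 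I would then extend to all $t$ by continuity, assuming $\phi_{Z_C}$ has no zero set with interior.
    \item \emph{From laws to pointwise equality.} Both $\hat{Z}_S$ and $Z_S$ are measurable functions of $Z$: the latter because the direct sum gives a unique linear projection $P_S$ onto $\mathrm{Im}(\varphi_S)$ along $\mathrm{Im}(\varphi_C)$, so $Z_S = P_S Z$. Project the identity $Z = \hat{Z}_S + \hat{Z}_C$ with $P_S$ and with $P_C = I - P_S$. This gives
    \[
    Z_S = P_S\hat{Z}_S + P_S\hat{Z}_C, \qquad Z_C = P_C\hat{Z}_S + P_C\hat{Z}_C .
    \]
    Combining $\hat{Z}_S \indep \hat{Z}_C$ with $Z_S \indep Z_C$ and the equality of marginals, I would show that the cross terms $P_C\hat{Z}_S$ and $P_S\hat{Z}_C$ are degenerate, hence constants. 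Pinning, in the form $\hat{Z}_S = 0$ only when $Y=0$ and matching the background, then forces these constants to vanish. That gives $\hat{Z}_S = P_S Z = Z_S$ and hence $\hat{Z}_C = Z_C$.
\end{enumerate}

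\textbf{Main obstacle.} The hard step is the last one: passing from equality in distribution (and independence) to almost-sure equality. Independence alone does not rule out measure-preserving reshufflings in general, so I would try first a Darmois--Skitovich / Cramér-type argument. Writing $Z$ as a sum in two ways with independent summands, linear dependence on $Z$ forces the summands to be the components along the unique direct-sum decomposition, unless the shared parts are degenerate. If full generality fails, I would add a mild hypothesis, such as non-Gaussianity or non-degeneracy of $Z_S$ and $Z_C$, or restrict $\hat{Z}_S$ to lie in $\mathrm{Im}(\varphi_S)$ as the decomposition is meant to. This is plausibly what the ``mild conditions'' of the abstract refer to.
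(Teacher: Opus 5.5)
Your outline follows the paper's structure: a leakage variable, the background identity $\hat{Z}_C=Z_C=Z$ on $\{Y=0\}$, marginals via characteristic functions, then an upgrade to almost-sure equality. But that upgrade, your item 3, which you yourself flag as the crux, is never carried out. You assert that $P_C\hat{Z}_S$ and $P_S\hat{Z}_C$ are degenerate without giving a mechanism. The fallbacks you list are either inapplicable or superfluous: Darmois--Skitovich is about \emph{linear} forms in independent variables, whereas $\hat{Z}_S=E_{\theta_E}(Z)$ is an arbitrary nonlinear function of $Z$, and no non-Gaussianity hypothesis is needed.

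In fact, once items 1--2 give $\hat{Z}_S\overset{Law}{=}Z_S$ and $\hat{Z}_C\overset{Law}{=}Z_C$, the step is one line in your own notation. Since $P_C$ is a fixed linear map, $P_C\hat{Z}_S\overset{Law}{=}P_CZ_S=0$, i.e.\ $P_C\hat{Z}_S=0$ a.s.; likewise $P_S\hat{Z}_C=0$ a.s. Your projected identity then gives $Z_S=P_S\hat{Z}_S=\hat{Z}_S$. So the restriction $\hat{Z}_S\in\mathrm{Im}(\varphi_S)$ you were prepared to impose is a \emph{consequence} of the marginal identification, not an extra assumption.

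The paper closes the same step differently, without projections. Independence and the identified marginals give $\mathbb{E}[e^{i\langle u,\hat{Z}_S\rangle+i\langle v,\hat{Z}_C\rangle}]=\phi_{Z_S}(u)\phi_{Z_C}(v)$. Differentiating in $u$ and setting $u=v=t$ yields $\mathbb{E}[L\,e^{i\langle t,Z\rangle}]=0$ for all $t$, hence $\mathbb{E}[L\mid Z]=0$ by Fourier uniqueness. Since $L=\hat{Z}_S-Z_S$ is $\sigma(Z)$-measurable, $L=0$ a.s.

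The second hole is in your item 1. The sentence ``being the common code, its law is the background law of $Z$'' is not an argument, and you offer to assume $\hat{Z}_C\indep Y$ instead. No extra assumption is needed. Pinning makes $Y=\mathds{1}\{\hat{Z}_S\neq 0\}$ a function of $\hat{Z}_S$, so Independence already gives $\hat{Z}_C\indep Y$. Combine this with $Z_C\indep Y$ (since $Y$ is a function of $S$ and $C\indep S$) and with $\hat{Z}_C=Z_C$ on $\{Y=0\}$. This yields $\hat{Z}_C\overset{Law}{=}Z_C$ provided $\mathbb{P}(Y=0)>0$, which is exactly the paper's argument. The same remark shows that conditioning on $\{Y=1\}=\{\hat{Z}_S\neq 0\}$ preserves $\hat{Z}_S\indep\hat{Z}_C$. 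So you do not need to reinterpret the Independence hypothesis, and nothing forces you to split by $Y$ in the first place.

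Your item 2 is sound. Asking that the zero set of $\phi_{Z_C}$ have empty interior is a legitimate alternative to the paper's analyticity assumption. It holds whenever $\phi_{Z_C}$ is analytic, e.g.\ for bounded tokens.
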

The proof is in App.~\ref{app:ident_proof}. This result provides the theoretical justification for our approach. While the true subspaces for $(Z_S, Z_C)$ are unobserved, the theorem guarantees that any decomposition $(\hat{Z}_S, \hat{Z}_C)$ satisfying these three structural constraints will perfectly recover the true underlying factors. We present the practical training objectives used to enforce these constraints in Section \ref{subsec:separator_training}. Importantly, this result requires no assumptions about the intrinsic effective dimension of $\hat{Z}_S$ and $\hat{Z}_C$.

\subsection{Control over Generative Models}
\label{subsec:generative}

To translate the learned latent decomposition into explicit semantic control over pixels, we introduce a conditional generative model $G_\psi$. Given the conditioning representation $Z$, it approximates the data distribution via :
\begin{equation}
    \hat{X} = G_\psi(Z, \varepsilon), \qquad \varepsilon \sim \mathcal{N}(0, I),
    \label{eq:generator}
\end{equation}
where $\varepsilon$ is independent noise. We denote by $\hat{S} \in \mathcal{S}$ and $\hat{C} \in \mathcal{C}$ the latent factors of the generated image $\hat{X}$. 

To ensure that varying the $\hat{Z}_S$ component within $Z$ effectively modifies \textit{only} the salient factor of the generated image (independent of the common information or noise), we 
quantify control via the mutual information $I(\hat{S}; \hat{Z}_S)$. Maximizing this quantity ensures that the latent code $\hat{Z}_S$ carries significant information about the output $\hat{S}$ (and similarly for $\hat{C}$ and $\hat{Z}_C$).

The next result establishes that valid control is guaranteed if the model satisfies three conditions: \textbf{high fidelity} to the real factors $S$ and $C$,  \textbf{low noise dependance} of the generated images and \textbf{low entanglement} between $\hat{Z}_C$ and $\hat{Z}_S$.

\begin{proposition}[Control decomposition]\label{prop:control}Assume a decomposition $(\hat{Z}_S, \hat{Z}_C)$ satisfies the identifiability conditions of Th. \ref{thm:identifiability}, with $Z=\hat{Z_S} + \hat{Z}_C
$ Then, control is lower-bounded by:
\begin{equation}
\label{eq:control-bound-S} I(\hat{S}; \hat{Z}_S) \ge \underbrace{I(S; \hat{S})}_{\text{Fidelity}} - \underbrace{I(\hat{X}; \varepsilon | Z)}_{\text{Noise dependence}} - \underbrace{I(S; \hat{Z}_C | \hat{Z}_S)}_{\text{Entanglement}}
\end{equation}
and, for the common context:
\begin{equation}
\label{eq:control-bound-C} I(\hat{C}; \hat{Z}_C) \ge I(C; \hat{C}) - I(\hat{X}; \varepsilon | Z) - I(C; \hat{Z}_S | \hat{Z}_C)
\end{equation}
\end{proposition}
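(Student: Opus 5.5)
We will prove \eqref{eq:control-bound-S} by a chain of elementary mutual-information inequalities; \eqref{eq:control-bound-C} then follows by swapping the roles of $(S,\hat{Z}_S)$ and $(C,\hat{Z}_C)$. The argument rests on two Markov-chain observations.
\begin{itemize}[leftmargin=*,topsep=2pt,itemsep=1pt]
    \item Since $\hat{Z}_S,\hat{Z}_C$ are functions of $Z=f_\theta(X)$, and $X=g(S,C)$, the real factor $S$ influences the generated image only through the conditioning. This gives
    $S \to (\hat{Z}_S,\hat{Z}_C) \to Z \to \hat{X} \to \hat{S}$, where the last arrow holds because $\hat{S}$ is a function of $\hat{X}$ (as $g$ is invertible).
    \item By the additive reconstruction condition of Thm.~\ref{thm:identifiability}, $Z=\hat{Z}_S+\hat{Z}_C$ is a deterministic function of the pair. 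Hence $I(\cdot\,;Z)\le I(\cdot\,;\hat{Z}_S,\hat{Z}_C)$.
\end{itemize}

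\textbf{Step 1 (bound fidelity by the pair of codes).} Starting from $I(S;\hat{S})$, apply data processing along the chain:
\[
I(S;\hat{S}) \le I(S;\hat{X}) \le I(S;Z,\varepsilon).
\]
The second inequality holds because $\hat{X}=G_\psi(Z,\varepsilon)$.

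\textbf{Step 2 (isolate the noise term).} By the chain rule,
\[
I(S;Z,\varepsilon)=I(S;Z)+I(S;\varepsilon\mid Z).
\]
Since $\varepsilon\indep (X,S,C)$, one could argue this second term is zero. Instead, to match the stated bound, we route it through $\hat{X}$. Write
\[
I(S;\hat{S})\le I(S;Z)+I(\hat{S};\varepsilon\mid Z)\le I(S;Z)+I(\hat{X};\varepsilon\mid Z).
\]
To justify this, decompose $I(S,Z;\hat{S})$ in two ways. The first inequality is the decomposition $I(\hat{S};S)\le I(\hat{S};S,Z)=I(\hat{S};Z)+I(\hat{S};S\mid Z)$, with $I(\hat{S};S\mid Z)=0$ by the Markov chain. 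We then handle $I(\hat{S};Z)$ in the other direction, in Step 4.

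\textbf{Step 3 (split the pair of codes).} By the chain rule,
\[
I(S;Z)\le I(S;\hat{Z}_S,\hat{Z}_C)=I(S;\hat{Z}_S)+I(S;\hat{Z}_C\mid\hat{Z}_S).
\]
The first term is at most $I(\hat{S};\hat{Z}_S)$ plus a correction. The second term is exactly the entanglement term.

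\textbf{Step 4 (relate $I(S;\hat{Z}_S)$ to control).} This is the main obstacle, since we need $I(S;\hat{Z}_S)\lesssim I(\hat{S};\hat{Z}_S)+I(\hat{X};\varepsilon\mid Z)$. The plan is to reorganize Steps 2–4 as a single decomposition:
\[
I(\hat{S};\hat{Z}_S)\ge I(\hat{S};Z)-I(\hat{S};\hat{Z}_C\mid \hat{Z}_S).
\]
We then bound $I(\hat{S};Z)\ge I(\hat{S};S)-I(\hat{X};\varepsilon\mid Z)$, using
\[
H(\hat{S}\mid Z)\le H(\hat{X}\mid Z)=I(\hat{X};\varepsilon\mid Z)+H(\hat{X}\mid Z,\varepsilon)=I(\hat{X};\varepsilon\mid Z),
\]
since $\hat{X}$ is determined by $(Z,\varepsilon)$. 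This step is delicate for continuous variables, where it is safest to work directly with the mutual-information forms. Combined with $I(\hat{S};S)\le I(\hat{S};Z)+H(\hat{S}\mid Z)$-type arguments, this yields the fidelity and noise terms.

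\textbf{Step 5 (control the entanglement term).} The remaining task is to bound $I(\hat{S};\hat{Z}_C\mid\hat{Z}_S)$ by $I(S;\hat{Z}_C\mid\hat{Z}_S)$. We will try to do this with a conditional data-processing argument given $\hat{Z}_S$, using Thm.~\ref{thm:identifiability}: the identities $\hat{Z}_S=\varphi_S(S)$ and $\hat{Z}_C=\varphi_C(C)$, together with $S\indep C$, make the conditional structure explicit. If that comparison fails in general, the fallback is to note that under identifiability both terms vanish whenever $\varphi_S$ is injective. In that case the bound holds with the stated entanglement term as a slack, and $I(S;\hat{Z}_C\mid\hat{Z}_S)\ge 0$ keeps the inequality valid.
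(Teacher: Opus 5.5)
\textbf{There is a genuine gap, in Step 5.} Your Step 4 is sound. Because $Z$ and $(\hat{Z}_S,\hat{Z}_C)$ determine each other, $I(\hat{S};\hat{Z}_S)=I(\hat{S};Z)-I(\hat{S};\hat{Z}_C\mid\hat{Z}_S)$. Also, $I(\hat{S};Z)\ge I(\hat{S};S)$ already follows from data processing along $S\to Z\to\hat{S}$, so the noise term is not even needed there.

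The comparison $I(\hat{S};\hat{Z}_C\mid\hat{Z}_S)\le I(S;\hat{Z}_C\mid\hat{Z}_S)$ that you want in Step 5 would require the conditional chain $\hat{Z}_C\to(S,\hat{Z}_S)\to\hat{S}$. That chain fails because $G_\psi$ reads $\hat{Z}_C$ directly. Here is a concrete counterexample:
\begin{itemize}[leftmargin=*,topsep=2pt,itemsep=1pt]
\item Take $\varphi_S$ injective, $\hat{Z}_S=\varphi_S(S)$ and $\hat{Z}_C=\varphi_C(C)$, so all identifiability conditions hold.
\item Take a deterministic $G_\psi$ whose output has salient factor $\hat{S}=h(\hat{Z}_C)$ for some non-constant $h$.
\item Then $I(S;\hat{Z}_C\mid\hat{Z}_S)=0$, because $S$ is a function of $\hat{Z}_S$.
\item But $I(\hat{S};\hat{Z}_C\mid\hat{Z}_S)=H(h(\hat{Z}_C))>0$, because $\hat{Z}_C\indep\hat{Z}_S$.
\end{itemize}
The proposition itself still holds in this example, since $I(\hat{S};\hat{Z}_S)=I(S;\hat{S})=0$; only your intermediate inequality breaks.

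The same example refutes your fallback. Identifiability plus injectivity kills the entanglement term $I(S;\hat{Z}_C\mid\hat{Z}_S)$, but not the generator-side term $I(\hat{S};\hat{Z}_C\mid\hat{Z}_S)$. Injectivity of $\varphi_S$ is not a hypothesis anyway. Likewise, the target of Steps 3--4, $I(S;\hat{Z}_S)\le I(\hat{S};\hat{Z}_S)+I(\hat{X};\varepsilon\mid Z)$, is false for a deterministic generator that ignores $\hat{Z}_S$. So Steps 1--3 lead nowhere.

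\textbf{The missing idea} is that you cannot discard $H(\hat{S}\mid S)$ when you lower-bound $H(\hat{S})$ by $I(\hat{S};S)$. That term is exactly the slack that absorbs the generator's own dependence on $\hat{Z}_C$. The paper adjoins $S$ and applies the chain rule:
\begin{align*}
I(\hat{S};\hat{Z}_C\mid\hat{Z}_S) &\le I(\hat{S},S;\hat{Z}_C\mid\hat{Z}_S) = I(S;\hat{Z}_C\mid\hat{Z}_S)+I(\hat{S};\hat{Z}_C\mid\hat{Z}_S,S)\\
&\le I(S;\hat{Z}_C\mid\hat{Z}_S)+H(\hat{S}\mid S).
\end{align*}
It then combines this with two facts:
\begin{itemize}[leftmargin=*,topsep=2pt,itemsep=1pt]
\item $I(\hat{S};\hat{Z}_S)=H(\hat{S})-H(\hat{S}\mid Z)-I(\hat{S};\hat{Z}_C\mid\hat{Z}_S)$;
\item $H(\hat{S}\mid Z)=I(\hat{S};\varepsilon\mid Z)\le I(\hat{X};\varepsilon\mid Z)$, using determinism of $G_\psi$ and discrete $\hat{S}$.
\end{itemize}
Finally, $H(\hat{S})-H(\hat{S}\mid S)=I(S;\hat{S})$ yields the stated bound. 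This is where the noise term genuinely enters. Note also that Theorem~\ref{thm:identifiability} plays no role beyond $Z=\hat{Z}_S+\hat{Z}_C$ with both codes being functions of $Z$.
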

The proof can be found in App.~\ref{app:control_proof}. By the Data Processing Inequality, $I(\hat{S}; S)\le I(\hat{X}, X)$ (and similarly $I(\hat{C}; C)\le I(\hat{X}, X)$). This means that to effectively use $\hat{Z}_S$ and $\hat{Z}_C$ to control the factors of a generated image, we need $f_\theta$  and $G_\psi$ to act as an autoencoder $X\mapsto\hat{X}$ where: (1) the factors of $X$ need to be preserved in $\hat{X}$ (\textit{i.e.,} high Fidelity), (2) the generated image must minimally depend on the noise $\varepsilon$ (\textit{i.e.,} low Noise Dependance) and (3) $\hat{Z}_S$ and $\hat{Z}_C$ should be independent. 


\section{Methodology}
\label{sec:methodology_main}
To translate the theoretical requirements of Section \ref{sec:theory} into a practical framework, we propose a two-stage training procedure. First, to ensure low Noise Dependance, we must construct the feature extractor $f_\theta$ and generative model $G_\psi$ such that the generated attributes are driven entirely by the conditioning $Z=f_\theta(X)$ rather than the initial sampling noise $\varepsilon$ (Stage 1). While standard diffusion and flow-matching models often rely heavily on the starting noise for structural variability\cite{kwondiffusion, guo2024initno,grimal2025sagalearningsignalaligneddistributions}, we require the common and salient factors of the generated image to be fully actionable through the conditioning alone. Second, once this actionable latent space is established, we train the encoder network $E_{\theta_E}$ to enforce the structural conditions on $\hat{Z}_S$ and $\hat{Z}_C$, ensuring high Fidelity and low Entanglement (Stage 2).

\subsection{Image-Conditioned Generator Architecture}
\label{subsec:conditioning_pipeline}

Standard prompt-based methods inject features into frozen text-to-image backbones, often failing to override the base model's priors and leading to identity loss or style drift (Fig.~\ref{fig:prompt_failure}). Consequently, we train a generative model \emph{from scratch} using image-only conditioning, derived from DINOv3 \cite{simeoni2025dinov3} embeddings, consisting in $K$ tokens. We adopt a Flow Matching objective, which enforces a deterministic mapping between noise and data, ensuring that visual fidelity is entirely determined by the structure of $Z$ rather than stochastic sampling paths. Details on the generator and the Cross-Query (CQ and CQC) encoding of the conditioning Z, which allows low noise dependence for $G_\psi$, can be found in App.~\ref{app:subsec:conditioning_pipeline} and in Fig~\ref{fig:combined_pipeline_and_query} (Left).

\subsection{Separator Encoder Architecture and Training}
\label{subsec:separator_training}

Our CA approach, \textbf{Diff-CA}, leverages the formulation from Sec.~\ref{sec:theory} and the conditioning pipeline from App.~\ref{sec:finegrain}. We implement the encoder $E_{\theta_E}$ as a small 5-block Transformer encoder. A learnable \texttt{CLS} token is prepended to the sequence $Z$. After processing, the CLS token state is projected via a linear layer to form the salient code $\hat{Z}_S$, while the remaining $K$ output tokens form the common information $\hat{Z}_C$ (Fig.~\ref{fig:combined_pipeline_and_query} (Right)).

\textbf{Additive structure.} We enforce the sum-constraint ${Z = \hat{Z}_S+ \hat{Z}_C}$ using the reconstruction loss:
\begin{equation}
    \mathcal{L}_{rec} = \mathbb{E}\left[||Z - (\hat{Z}_S+\hat{Z}_C)||_2^2\right]
\end{equation}

\textbf{Pinning.} To enforce the pinning condition ($Z_S=0$) for the background distribution ($Y=0$), we apply a norm-based regularization on $\hat{Z}_S$. This penalty encourages no (salient) information content for background samples ($Y=0$) while preventing collapse for target samples ($Y=1$):
\begin{equation}
\mathcal{L}_{pin} = \mathbb{E}\left[\mathds{1}_{Y=0} ||\hat{Z}_S||_2^2 + \mathds{1}_{Y=1}e^{-\|\hat{Z}_S||_2^2}\right]
\end{equation}

\paragraph{Cycle Consistency.} To enforce the independence condition, we require that the learned factors be interchangeable between pairs of samples with a cycle-consistency loss. We construct a mixed latent code ${Z}^{mix} = \hat{Z}_C^a + \hat{Z}_S^b$ using factors from two independent samples $a$ and $b$. The encoder is trained to recover the original codes from this mixed sample:
\begin{align}
\mathcal{L}_{cyc} = \mathbb{E}&\left[|| \hat{Z}_C^{mix} - \hat{Z}_C^a ||_2^2 + ||\hat{Z}_S^{mix} - \hat{Z}_S^b ||_2^2]\right]
\end{align}
where $ (\hat{Z}_S^{mix}, \hat{Z}_C^{mix}) = E_{\theta_E}({Z}^{mix})$. This prevents the estimation of the common code from relying on correlations with a specific salient code, effectively pushing the learned distributions toward independence (see App.~\ref{app:cycle}). Intuitively, we want swapped counterfactuals to be "valid" samples ($Z_{mix} \overset{Law}{=} Z$). We rely on the separator encoder $E_{\theta_E}$ itself to act as a weak, implicit discriminator. Specifically, we maintain an exponential moving average (EMA) copy of $E_{\theta_E}$, which is used to re-encode swapped counterfactuals $Z_{\mathrm{mix}}$. We detail this heuristic regularization strategy in App.~\ref{app:separation_details}.

\textbf{Adversarial Training.} To further discourage $\hat{Z}_C$ from being predictive of $Y$ (thus encouraging $\hat{Z}_C$ to encode common information), we employ a Gradient Reversal Layer (GRL) adversarial setup \cite{ganin2015unsupervised}. A discriminator $D_{adv}$ is trained to predict the domain $Y$ from the common codes $\hat{Z}_C$ by minimizing the binary cross-entropy:
\begin{equation}\mathcal{L}_{adv} = - \mathbb{E} \left[ y \log \hat{y} + (1-y) \log (1 - \hat{y}) \right]\end{equation}
where $\hat{y} = D_{adv}(\hat{Z}_C)$. To optimize this adversarial loss without instability, we introduce a self-tuning mechanism that dynamically scales the GRL strength to maintain a target discriminator accuracy. We detail this training protocol in App.~\ref{app:separation_details}.

\textbf{Total Objective.} The total loss is defined as:
\begin{equation}
    \mathcal{L}_{tot} =  \mathcal{L}_{rec} + \lambda_1\mathcal{L}_{pin} +   \lambda_2\mathcal{L}_{cycle} +\lambda_3\mathcal{L}_{adv}
\end{equation}
The optimization is a min-max game where $D_{adv}$ minimizes classification error, while the encoder maximizes it (via GRL with strength $\lambda_{adv}$) to remove $Y$-related information:
\begin{equation}
\min_{\theta_{adv}} \mathcal{L}_{adv} \quad \text{and} \quad \min_{\theta_E} \left( \mathcal{L}_{tot} - \lambda_{adv} \mathcal{L}_{adv} \right)
\end{equation}

\begin{figure}[t]
    \centering
    \begin{minipage}[t]{0.54\linewidth}
        \centering
        \includegraphics[width=0.99\linewidth, trim={1.5cm 0 0.6cm 0}, clip]{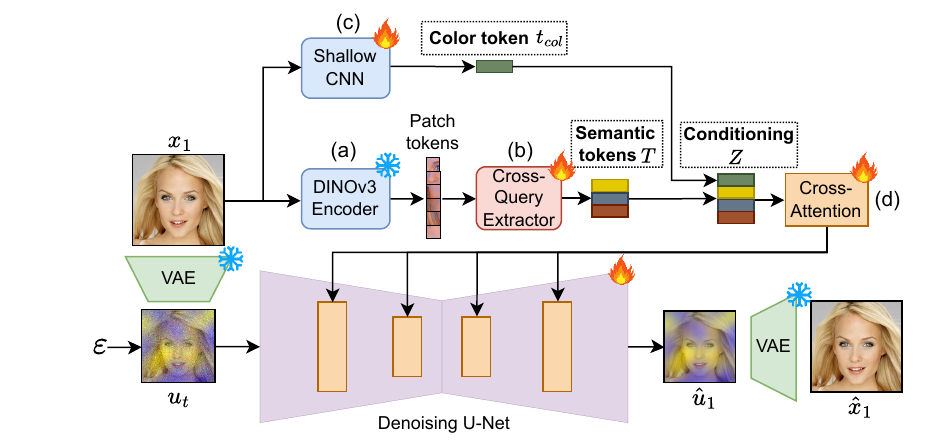}
        \vspace{0.1cm}
    \end{minipage}\hfill
    \begin{minipage}[t]{0.44\linewidth}
        \centering
        \includegraphics[width=0.99\linewidth, trim={0cm 0 1.1cm 0}, clip]{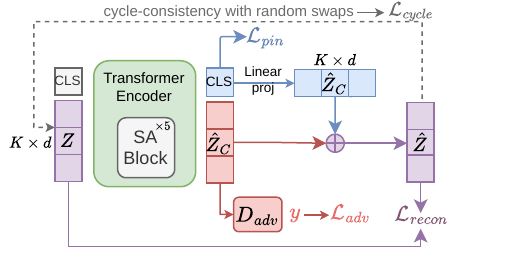}
        \vspace{0.1cm}
    \end{minipage}
    
    \caption{Our  2-Stage training protocol. \textbf{Left:} Conditioning pipeline. An input image $x_1$ is mapped to a latent $u_0$ by a VAE encoder and to DINOv3 features (a). A cross-query module (b) produces semantic tokens $T$, and a small CNN (c) produces a color token $t_{\text{col}}$. The concatenated tokens $Z$ condition a U-Net diffusion model via cross-attention (d). \textbf{Right:} {Training pipeline of our Common-Salient encoder. A {\color{gray} CLS} token is prepended to the {\color{Fuchsia} input tokens}. The encoder separates the {\color{NavyBlue}common} and {\color{BrickRed}salient} parts, whose sum reconstructs the input. An anchor loss $\mathcal{L}_{anchor}$ ensures that the salient part has no information about background samples, while adversarial training purifies the common part. A cycle-consistency protocol ensures that counterfactuals (swaps) produce meaningful outputs.}}
    \label{fig:combined_pipeline_and_query}
\end{figure}

\section{Experiments}
\label{sec:experiments}

\noindent\textbf{Implementation Details.} Comprehensive details regarding architectures, hyperparameters, and data preprocessing for all stages are provided in App. \ref{app:implementation} and \ref{app:separation_details}.

\subsection{Empirical Validation of the Conditioning Space}
\label{subsec:sanity_checks}

\begin{figure}[htbp]
    \centering
    
    \begin{minipage}[t]{0.40\linewidth}
        \centering
        \textbf{Reconstruction}\par\vspace{9pt}
        \resizebox{\textwidth}{!}{
        \begin{tabular}[t]{@{}ccccc@{}}        
        \toprule
        Encoder & Cond. size & SSIM $\uparrow$ & LPIPS $\downarrow$ & FID $\downarrow$  \\
        \midrule
        DiffAE  & 512 & 0.530 & 0.209 & 14.28 \\
        EncDiff  & $32\times 128$ & 0.420 & 0.323 & 18.43 \\
        CLIP & $197\times 128$ & 0.490 & 0.201 & 7.69  \\
        DINOv3 & $261\times 128$ & \textbf{0.645} & \textbf{0.099} & 5.66 \\ \midrule
        \textbf{CQ (ours)}   & $32 \times 128$ & 0.589  & 0.124 & \underline{5.59} \\
        \textbf{CQC (ours)} & $32 \times 128$ & \underline{0.611} & \underline{0.114} & \textbf{5.52} \\
        \bottomrule
        \end{tabular}}
    \end{minipage}\hfill
    \begin{minipage}[t]{0.58\linewidth}
      \centering
        \textbf{Latent Space Smoothness}\par\vspace{3pt}
    \includegraphics[width=0.99\linewidth]{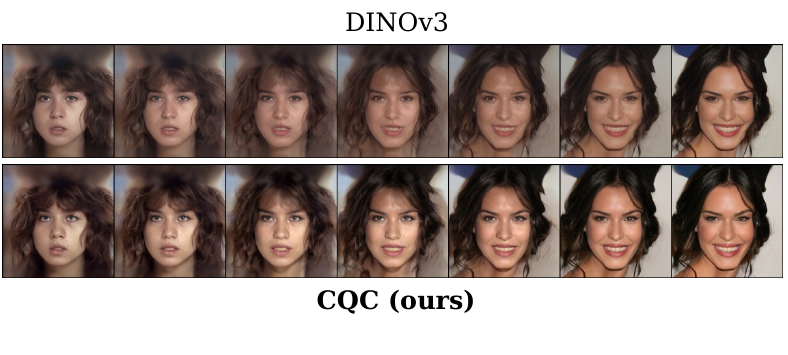}
    \end{minipage}
    
    \caption{\textbf{Left}: Comparison of image reconstruction on CelebA-HQ $256{\times}256$. This serves as a sanity check to make sure that the images produced by our model are of good quality. Our methods (CQ and CQC) present competitive reconstructions compared to DINOv3 with fewer tokens, while also being able to carry out editing and image control (which DINOv3 cannot). \textbf{Right}: Exploring the $Z$-space. We linearly interpolate between $Z$ codes of two images and generate from random noise at each interpolation point. Our method yields smooth changes in identity, attributes, and color without introducing artifacts.}
    \label{fig:recon_and_interp}
\end{figure}
\noindent\textbf{Setup and Baselines.} We evaluate our conditioning architecture against several established baselines: DiffAE~\cite{preechakul2022diffusion}, which utilizes a ResNet to produce a single 512-d conditioning vector; EncDiff~\cite{yang2024diffusion}, which trains a CNN to generate $32 \times 128$ tokens; and projected variants of CLIP and DINOv3. For the latter, we utilize all last-layer activations (197 and 261 tokens) and project each to 128-$d$ before conditioning. All models are trained on FFHQ~\cite{karras2019style} and tested on CelebA-HQ~\cite{liu2015faceattributes}$512\times 512$.

\textbf{Reconstruction.}
To validate the fidelity of our generative model, we evaluate the reconstruction performance of our Cross-Query (CQ) and Color (CQC) encoders (Implementation details in App.~\ref{app:subsec:conditioning_pipeline}) against established conditioning approaches (Fig.~\ref{fig:recon_and_interp}, Left). Our conditioning yields highly competitive reconstruction metrics, despite utilizing a significantly smaller token representation than its strongest baseline, DINOv3. 
Qualitative results can be found in App. ~\ref{app:implementation} and \ref{app:cond_qualitative}

\textbf{Linear structure of the conditioning space.}
To evaluate the geometric properties of our learned token manifold, we also perform linear interpolation between the conditioning codes ($Z$) of two distinct images (Fig.~\ref{fig:recon_and_interp}, Right). When compared with DINOv3, our conditioning yields drastically smoother, artifact-free transitions, indicating that our conditioning provides a more \textbf{continuous} and \textbf{well-structured} representation while maintaining \textbf{competitive reconstruction} performance.

\subsection{Contrastive Decomposition and Swapping}
\label{subsec:main_results}

\noindent \textbf{Datasets and Attributes.} We evaluate our method across three distinct domains to demonstrate its generality. We use \textbf{FFHQ} \cite{karras2019style} and target two salient attributes: Glasses (binary: No Glasses vs. Glasses), Smile (Neutral vs. Smiling). For evaluation only, we use fine-grained labels when possible (Reading vs. Sunglasses) to measure latent structure discovery. On \textbf{AFHQ} \cite{choi2020starganv2}, we treat species as the salient factor, using a binary Cat vs. Dog setup. \textbf{BraTS 2023} \cite{kazerooni2024brain} is a medical imaging benchmark composed of brain MRI data where the salient factor $S$ is the presence of a brain tumor. We use 2D slice images with a slice-level binary (tumor) label as $Y$.

\noindent\textbf{Baselines.} We compare our method against three state-of-the-art CA generative models: MM-cVAE \cite{weinberger2022moment}, SepVAE \cite{louiset2024sepvae}, and DoubleInfoGAN \cite{carton2024double} (our method is the first to use diffusion for CA). As discussed in Sec. \ref{sec:related_work}, these models rely on low-capacity latent bottlenecks. We use their default configurations.
\begin{table}[t]
\centering
\caption{Quantitative comparison on FFHQ (Glasses Attribute). We evaluate reconstruction fidelity and the precision of salient attribute swaps. Swapping metrics assess both the success of the transfer (Acc: glasses class accuracy) and the preservation of unrelated common attributes (Gender, Smile, Pose, ID-Sim \ref{app:cond_quant}) to measure latent leakage. Swapping results are macro-averaged over fine-grained classes (NoGlasses, ReadingGlasses, SunGlasses). Detailed  results can be found in Table \ref{tab:full_swapping_results}.
}
\label{tab:swap}
\resizebox{\textwidth}{!}{
\begin{tabular}{l | ccc || c|cccc}
\toprule
& \multicolumn{3}{c||}{\textbf{Reconstruction}} & \multicolumn{5}{c}{\textbf{Swapping Fine-grained classes (Glasses) }} \\
\cmidrule(lr){2-4} \cmidrule(lr){5-9}
\textbf{Method} & \textbf{SSIM} $\uparrow$ & \textbf{LPIPS} $\downarrow$ & \textbf{ID-Sim} $\uparrow$ & \textbf{Acc} $\uparrow$ & \textbf{ID-Sim}$\uparrow$ &\textbf{Gender} (acc) $\uparrow$ & \textbf{Smile} (acc) $\uparrow$ & \textbf{Pose} (MAE) $\downarrow$   \\
\midrule
MM-cVAE  & .370 & .643 & .307&11.8 &.302 &62.3 &57.7 & 4.62\\
SepVAE  & .358 & .642& .307& 19.6 &.301 &57.7 &56.4 & 8.29\\
D.InfoGAN & .326& .430 &.327 & 47.0 &.314 &68.5 &79.2 &  7.35\\
\midrule
\textbf{Diff-CA (Ours)} & \textbf{.610} & \textbf{.115} & \textbf{.522}& \textbf{94.5} &\textbf{.474} &\textbf{92.5} & \textbf{92.9}& \textbf{1.87}\\
\bottomrule
\end{tabular}
}
\end{table}

\paragraph{Reconstruction and swapping.} 
As shown in Fig.~\ref{fig:swapping_baselines}, VAE and GAN-based baselines produce blurred reconstructions and lose identity during swaps. By contrast, Diff-CA not only reconstructs faithfully from $\hat{Z}_S+\hat{Z}_C$, but also allows for the swapping of $\hat{Z_S}$ between images, transferring the target attribute without altering the rest of the image. 

\begin{figure}[t]
    \centering
\includegraphics[width=0.99\linewidth]{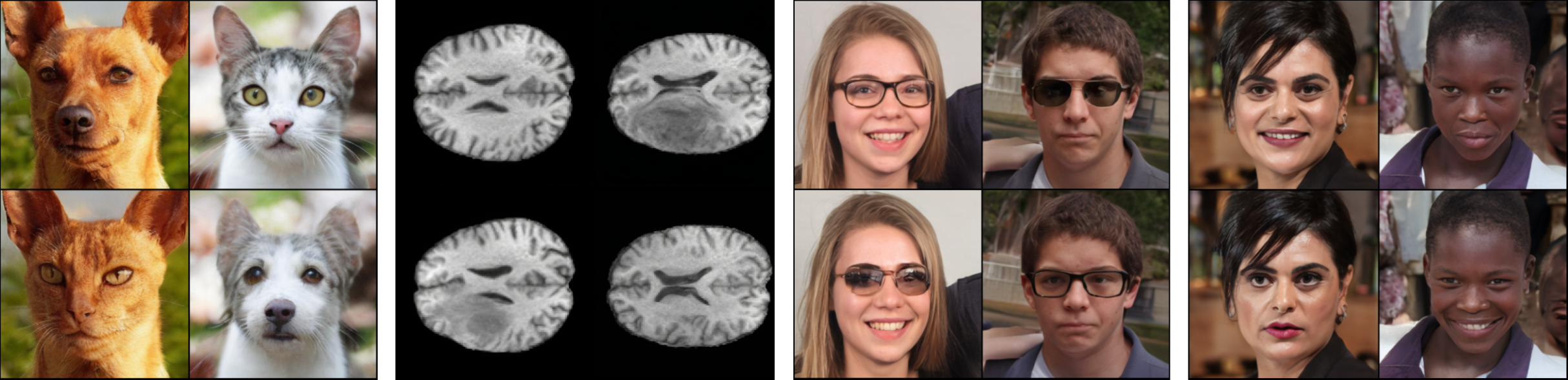}
    \caption{For each dataset, the first row represents two real images from background and target  while the second row shows swapping results with salient : ``cat'', ``tumor'', ``glasses'' and ``smile".}  
    \label{fig:main_swap}
\end{figure}


The superiority of our approach in Fig.~\ref{fig:swapping_baselines} over the others is further confirmed by the quantitative results in Tab.~\ref{tab:swap}, where Diff-CA substantially outperforms all baselines across every metric.

Beyond significantly better reconstruction performances, the most telling evidence of our decomposition quality lies in the swapping results. Diff-CA achieves 94.5\% fine-grained glasses classification accuracy after swap, more than doubling the best baseline (47.0\% for DoubleInfoGAN), demonstrating that the salient subspace captures not only coarse binary attributes but is furthermore structured to reflect fine-grained stylistic distinctions despite training under binary supervision only. 

\paragraph{Preservation of the common factors.} Crucially, this transfer is \emph{selective}: gender, smile, and head pose are preserved with 92.5\%, 92.9\%, and 1.87$^\circ$ MAE respectively, compared to 68.5\%, 79.2\%, and 7.35$^\circ$ for DoubleInfoGAN. The gap confirms that $\hat{Z}_S$ and $\hat{Z}_C$ are well disentangled, with minimal attribute leakage across subspaces.

Finally, Fig.~\ref{fig:main_swap} shows that this capacity for clean salient separation generalises beyond faces, extending to animal species (AFHQ) and pathological regions (BraTS), which underlines the domain-agnostic nature of our contrastive decomposition.

\subsection{Latent Space Analysis} 
\label{subsec:latent_analysis}

\paragraph{Salient interpolation.} To evaluate the smoothness and semantic coherence of the learned salient latent space, we perform linear interpolation between salient codes while fixing the common factor. Specifically, given two images $a$ and $b$, we extract their respective decompositions  $\hat{Z}^a_S, \hat{Z}^a_C$ and $\hat{Z}^b_S, \hat{Z}^b_C$, then generate images along the interpolation path $G(\hat{Z}^a_C + \alpha \hat{Z}^a_S + (1-\alpha) \hat{Z}^b_S, \varepsilon)$ for $\alpha \in [0, 1]$. As shown in Fig.~\ref{fig:salient-interpol}, this operation produces semantically meaningful intermediate results. For the glasses attribute (top row), the interpolation smoothly transitions between different styles of eyewear while preserving identity and other facial characteristics. For the cat-dog attribute (bottom row), interpolation reveals a progressive emergence of species-specific features (e.g., ear shape, whiskers) while maintaining pose and background. These results demonstrate that our learned $Z_S$ captures a continuous representation of the salient factor, enabling fine-grained control beyond the weak binary supervision used during training.

\begin{figure}[t]
    \centering
\includegraphics[width=0.99\linewidth]{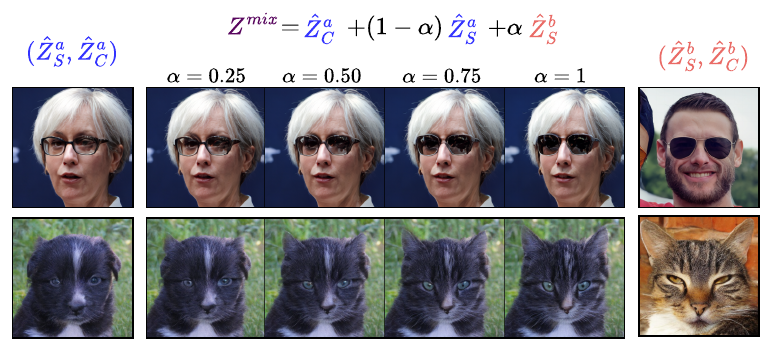}
    \caption{\textbf{Salient interpolation}: We fix the common $\hat{Z}_C$ of the left image and interpolate the salient $\hat{Z}_S$ with the salient of the right image. \textbf{Top}: The style of the glasses is progressively transferred. \textbf{Bottom}: cat features (ears, whiskers) progressively appear.}
    \label{fig:salient-interpol}
\end{figure}

\begin{figure}[t]
    \centering
    \begin{minipage}[t]{0.99\linewidth}
        \centering
        \includegraphics[width=0.37\linewidth]{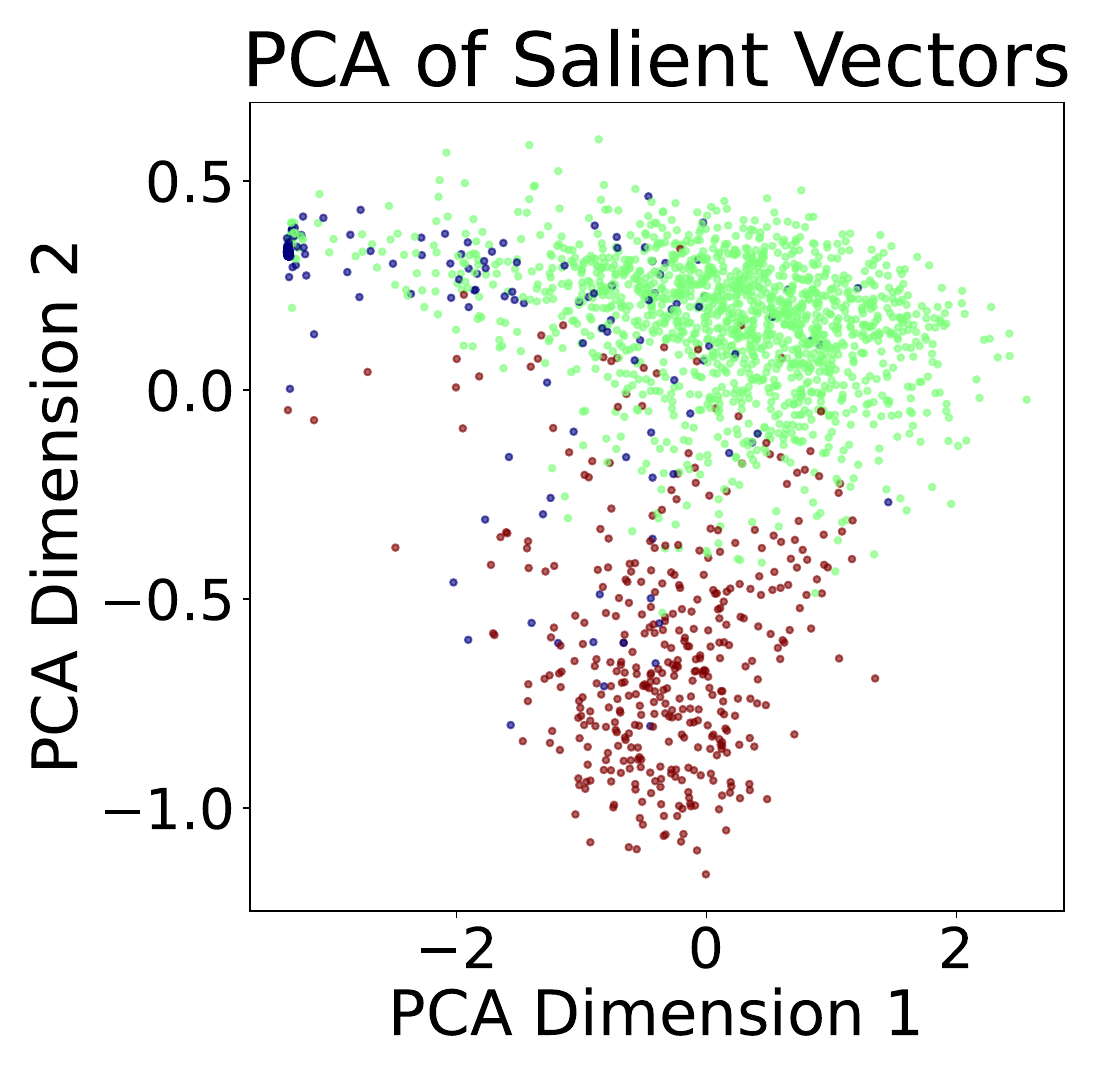}
        \centering
        \includegraphics[width=0.6\linewidth]{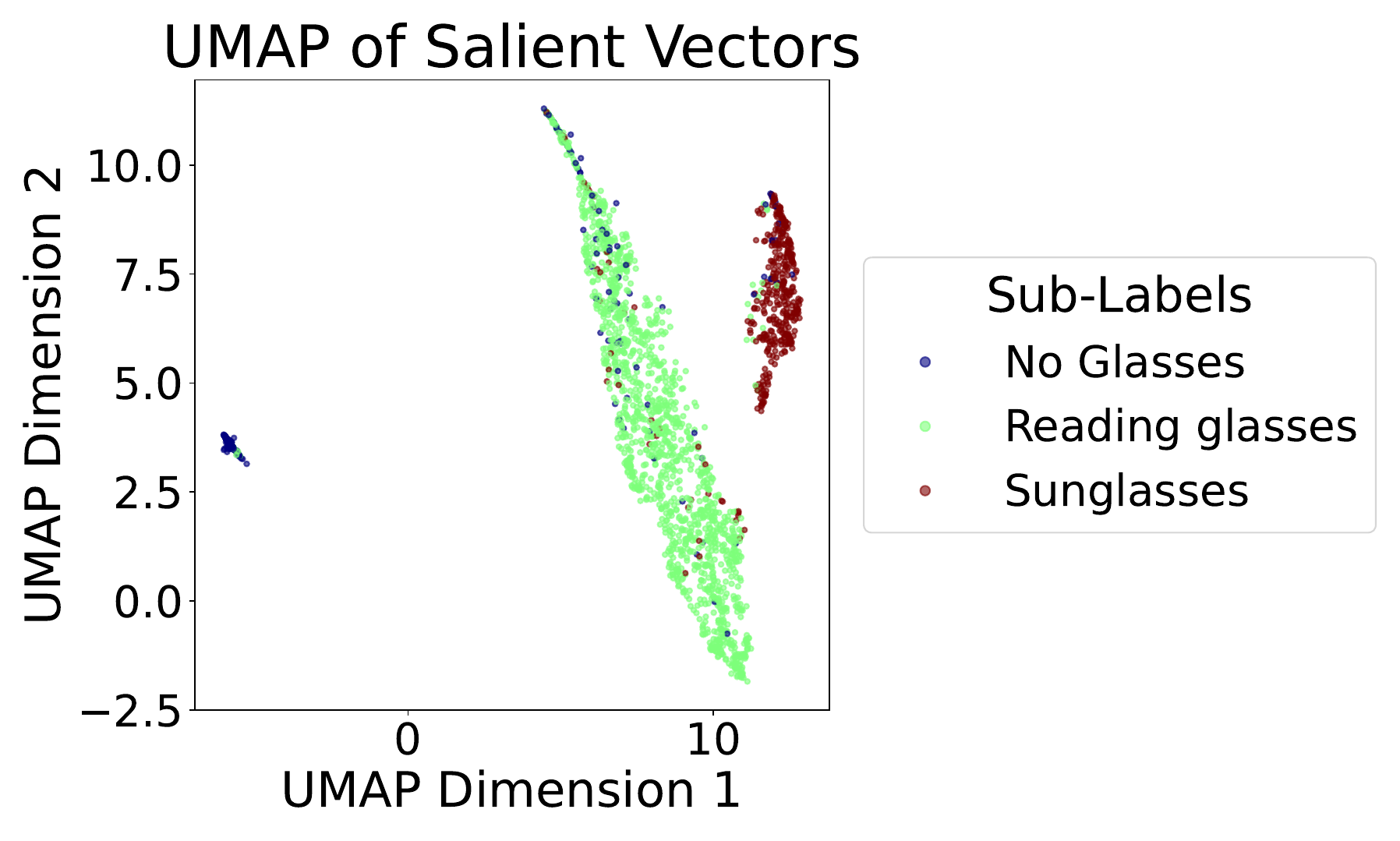}
    \caption{PCA and UMAP projection of $\hat{Z}_S$. Even though $\hat{Z}_S$ was only learned using weak binary supervision, fine-grained subclasses are still separated.}
    \label{fig:projection}
    \end{minipage}
\end{figure}
\paragraph{PCA and UMAP projections.} To assess whether the learned salient codes $\hat{Z}_S$ capture fine-grained structure beyond the coarse binary supervision, we analyze their geometric organization using dimensionality reduction. Figure~\ref{fig:projection} shows PCA and UMAP projections of salient vectors extracted from the weak supervision (Glasses vs. No Glasses). We plot the three fine-grained classes, No Glasses, Reading Glasses, and Sunglasses, in different colors. The three subclasses form clearly distinct clusters in the UMAP projection, indicating that $\hat{Z}_S$ spontaneously learns to separate fine-grained variations of the salient attribute without explicit supervision. 

\section{Conclusion and Future Work}
We propose a rigorous theoretical framework for Contrastive Analysis in the context of generative models. We proved that under an additive structural assumption with weak binary supervision, the true common and salient factors are uniquely identifiable (Th.~\ref{thm:identifiability}), and derived a lower bound on controllability based on mutual information that establishes the conditions for effective latent manipulation (Prop.~\ref{prop:control}).

Building on this theory, we introduced Diff-CA, a novel method that leverages both a high-fidelity editable conditioning space and a practical CA training framework. Our conditioning architecture constructs a latent space that satisfies the structural assumptions required by our identifiability results, while enabling precise semantic control. The encoder network, trained with our proposed losses learns to decompose conditioning tokens into common ($\hat{Z_C}$) and salient ($\hat{Z_S}$) factors that can be independently manipulated.

Diff-CA addresses a fundamental limitation of existing CA methods: the trade-off between reconstruction quality and editability. While prior CA approaches sacrifice image fidelity to achieve latent separation, our diffusion-based model maintains both. Empirically, Diff-CA achieves state-of-the-art reconstruction fidelity across multiple domains while simultaneously delivering state-of-the-art editing performance.

Our current implementation operates on 2D images and is not adapted to fully 3D volumetric data, such as high-resolution medical volumes, which we leave for future work. In addition, we focus on an additive factorization of common and salient components. Exploring alternative factorizations, as well as settings where salient factors appear in both distributions, is an important future research direction.

\section*{Acknowledgments}
We acknowledge the support of the French Agence Nationale de la Recherche (ANR) under reference ANR-21-CE23- 0024 IDEGEN, ANR-19-CE40-005 MISTIC and EUR BERTIP (ANR-18-EURE-0002). This work was performed using HPC resources from GENCI–IDRIS (A0160615058).

\bibliographystyle{plainnat}

{
\small
\bibliography{ref}
}
\clearpage
\appendix

\titleformat{\section}[block]    
  {\large\bfseries}        
  {\appendixname~\thesection} 
  {0.0em}                  
  {: }                     

\section{Broader Impact}
Our method has the potential to significantly impact domains such as medical imaging, where diffusion-based models combined with contrastive analysis could help identify subtle imaging patterns associated with specific pathologies that may not be visible to the naked eye of a clinician. This could lead to improved understanding of disease mechanisms and enable the discovery of novel, non-invasive imaging biomarkers. More broadly, structuring the representation space through contrastive objectives may enhance the trustworthiness and explainability of diffusion-based models for generation and editing by disentangling common and salient factors, thereby facilitating the identification of hidden biases or spurious shortcuts. However, these benefits must be balanced against potential risks: although Diff-CA primarily aims at enhancing explainability, improved generative and editing capabilities may still enable the creation of highly realistic synthetic content, raising concerns about deepfakes and misinformation. Furthermore, if not carefully designed, particularly in dataset construction, contrastive objectives may inadvertently reinforce existing biases or induce misleading feature separations, thereby undermining the intended gains in robustness and interpretability. Finally, AI decisions can affect an uncautious expert's final decision, which could have dramatic consequences in fields such as medicine. The use of contrastive analysis (CA) and, more broadly, artificial intelligence should be conducted with all necessary critical thinking and should NOT replace, but rather support, the expert's work or analysis.

\section{Additional Results and Discussion on T2I Models}
\label{sec:LLMs_failure}

\begin{figure}[!ht]    \centering\includegraphics[width=0.99\linewidth]{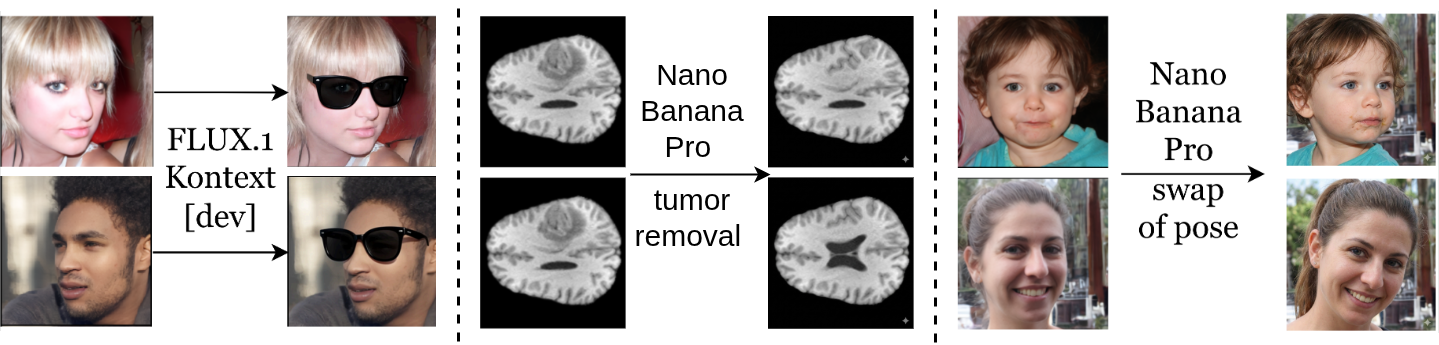}
    \caption{\textbf{Failure cases} of some prompt-based diffusion editing methods. \textbf{Left}: FLUX.1 Kontext adds the same generic sunglasses to everyone. \textbf{Middle}: Nano Banana Pro correctly removes the tumor but fails to generate an anatomically realistic image (the generated brain gyrification is not realistic), and it alters the healthy anatomy (ventricles in the bottom image). \textbf{Right}: Nano Banana Pro fails to swap head position without altering other attributes (\textit{e.g.,} background, hair, colors).}
    \label{fig:prompt_failure}
\end{figure}

\begin{figure}[t]    \centering\includegraphics[width=0.99\linewidth]{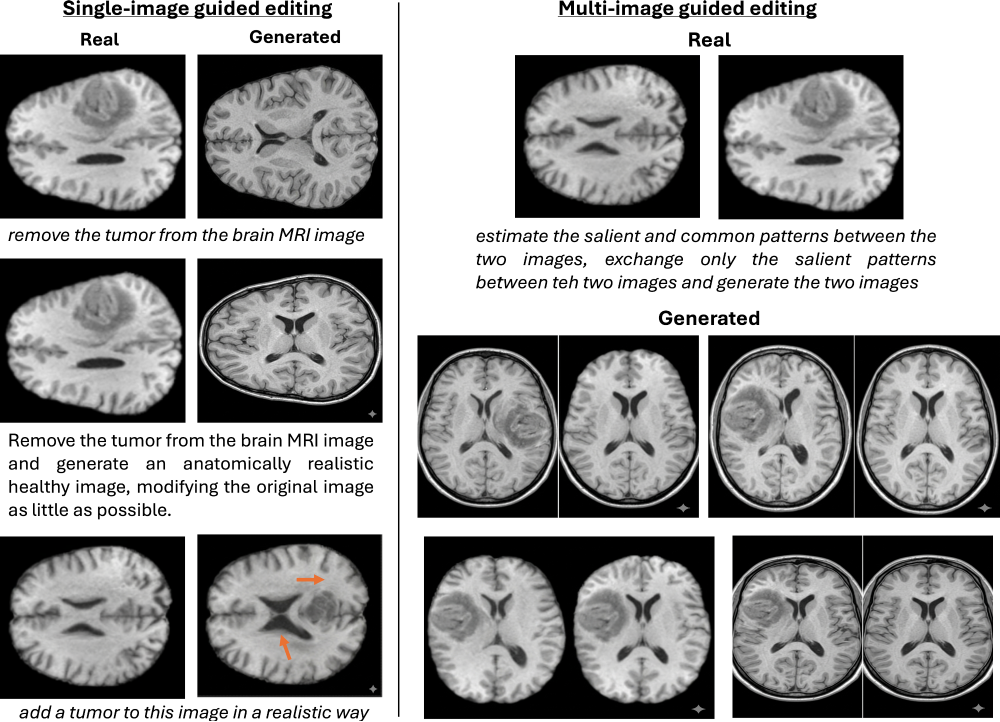}
    \caption{\textbf{Failure cases} of Nano Banana 2 (Gemini 3 Flash Image). \textbf{Left}: We use a single real image and a prompt to guide the editing. All generated images are either anatomically unrealistic or show altered anatomy (orange arrows). \textbf{Right}: Two real images (without and with a tumor) are provided as input, along with a prompt to guide the editing. This setup more closely resembles the goal of Contrastive Analysis (CA), although CA is a population-based statistical method rather than an image editing approach based on two input images. The different pairs of generated images correspond to outputs generated sequentially (one after the other) using the same prompt, with the addition of the instruction ``the previous images are wrong''. This shows that the results can vary depending on the prompt (low robustness) and that, even with corrective prompts, the model fails to produce a convincing solution. Indeed, all generated images exhibit differences in orientation and anatomical structures compared to the original images (\textit{e.g.,} skull, gyrification, ventricles, shape), and in some cases the tumor (i.e., the salient feature) is not correctly added or removed.}
\label{fig:prompt_failure_banana2}
\end{figure}

\addcontentsline{toc}{section}{\appendixname~\thesection: Additional Results on T2I Models}
Text-to-image (T2I) methods have seen massive adoption for general-purpose image editing, yet they exhibit critical structural limitations when precise, controlled, or out-of-domain operations are required. We discuss four such limitations that are particularly relevant to the CA setting.

\paragraph{Output randomization.} Across repeated runs with identical inputs and prompts, T2I models produce meaningfully different outputs. On medical images, for instance, repeated queries yield varying degrees of lesion removal, inconsistent tissue intensities, and unstable anatomical geometry. In industrial and medical contexts, reproducibility is a hard requirement for validation and regulatory compliance. A method that produces different results across runs cannot be audited or certified regardless of its average visual quality.

\paragraph{Distributional implausibility.} Even visually plausible outputs can be clinically or scientifically incorrect, a failure mode that is silent and therefore particularly dangerous. In our MRI experiments, prompted baselines and our own replications produce  brain ventricle and gyrification geometries inconsistent with healthy anatomy, yet these outputs appear reasonable to a non-expert. In any domain requiring distributional conformity, outputs that fall outside the target distribution are more problematic than obvious failures. By performing exact latent swapping within a model trained on the target distribution, our approach preserves individual anatomy by construction.

\paragraph{Prompt sensitivity.} The quality of T2I outputs is highly sensitive to the precise wording of the prompt, a choice entirely decoupled from the actual content of interest. This uncontrolled variable makes deployment impractical in settings where the salient concept is not easily verbalized, or where prompt standardization across users and institutions cannot be guaranteed. CA requires no such verbalization, which is precisely its motivation.

\paragraph{Domain-agnostic safety filters.} General-purpose generators apply content filters that can interfere unpredictably with scientific use cases. In our experiments, pathology re-introduction (a natural reversibility check) was refused by the model in several runs. Models trained specifically for a target domain and task, as in our framework, operate without such constraints.

\begin{figure}[t]
    \centering
    \includegraphics[width=0.96\linewidth]{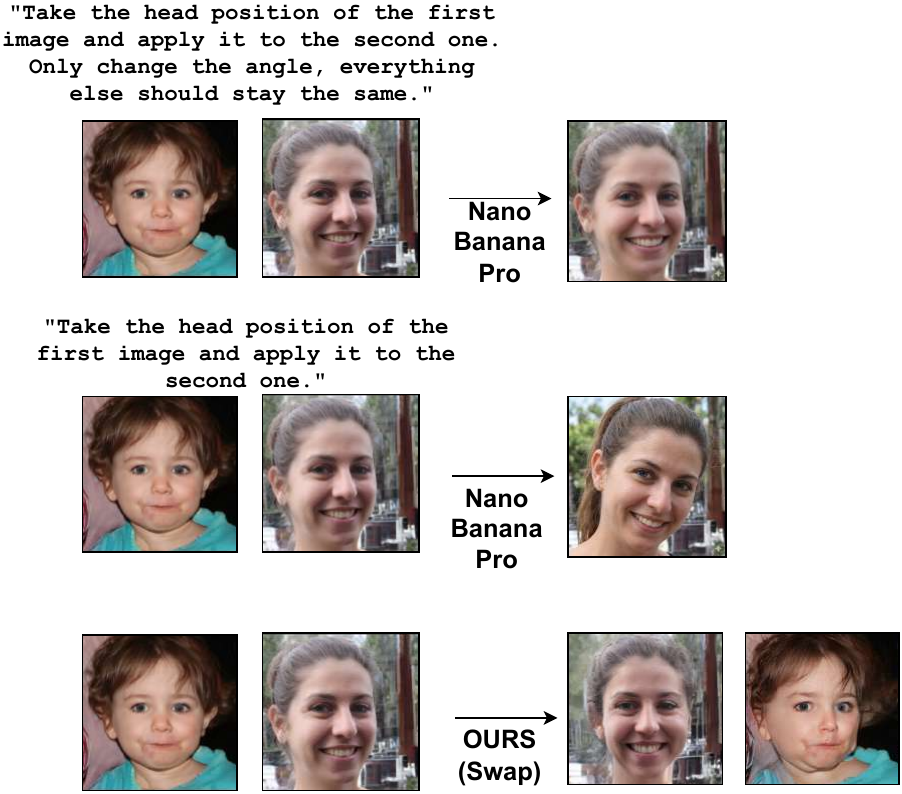}
    \caption{Failure cases of Nano Banana Pro for a head position transfer for two different prompts, and our swapping model.}
    \label{fig:gemini_fail}
\end{figure}

The most widely used commercial models include Nano Banana Pro/2 \cite{google2025nanobanana} and ChatGPT, which rely on in-context information and can technically use multiple images alongside a text prompt to perform edits and swapping. In Fig.~\ref{fig:prompt_failure}, Fig.~\ref{fig:prompt_failure_banana2} and Fig.~\ref{fig:gemini_fail}, we illustrate failure cases for both single-image and multi-image guided editing,  
highlighting the importance of specialized models for critical tasks, such as medical imaging.

\section{Theory}
\label{app:TheoryAndProofs}

\subsection{Proof of Theorem \ref{thm:identifiability}}
\label{app:ident_proof}
We prove the following result:
\addtocounter{theorem}{-1}
\begin{theorem}[Identifiability of Additive Factors]
Assume the conditioning $Z$ is generated by the direct sum of independent factors $Z_S$ and $Z_C$, where $Z_S$ vanishes for background samples ($Y=0$). If a learned decomposition $ Z \mapsto (\hat{Z}_S, \hat{Z}_C)$ satisfies:
\begin{enumerate}[leftmargin=1cm, topsep=0pt]
    \item \textbf{Additive Reconstruction:} The learned codes sum to the input: $\hat{Z}_S + \hat{Z}_C = Z$ 
    \item \textbf{Pinning:} The salient code vanishes only for background samples: $Y=0 \Longleftrightarrow \hat{Z}_S = 0$.
    \item \textbf{Independence:} The learned codes are independent: $\hat{Z}_S \indep \hat{Z}_C$.

\end{enumerate}
Then, the learned codes uniquely identify the true encoded factors:
\begin{equation}
    \hat{Z}_S = Z_S \quad \text{and} \quad \hat{Z}_C = Z_C
\end{equation}
\end{theorem}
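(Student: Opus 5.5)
The plan is to reduce the claim to uniqueness of the direct-sum decomposition $\mathbb{R}^d\supseteq V_S\oplus V_C$, with $V_S=\mathrm{span}\,\mathrm{Im}(\varphi_S)$ and $V_C=\mathrm{span}\,\mathrm{Im}(\varphi_C)$. I introduce the discrepancy
\begin{equation*}
D := \hat{Z}_S - Z_S = Z_C - \hat{Z}_C,
\end{equation*}
where the equality uses Additive Reconstruction together with $Z=Z_S+Z_C$. The goal is $D=0$ almost surely, and I treat the two classes separately. For background samples ($Y=0$), the assumption on the true factors gives $Z_S=0$ and Pinning gives $\hat{Z}_S=0$. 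Hence $D=0$ on $\{Y=0\}$, so $\hat{Z}_C=Z_C=Z$ there. This part is immediate.

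For target samples I first obtain a distributional identification. By Pinning, $Y=\mathds{1}\{\hat{Z}_S\neq 0\}$ is a measurable function of $\hat{Z}_S$. Independence ($\hat{Z}_S\indep\hat{Z}_C$) then gives $\hat{Z}_C\indep Y$, so $\mathcal{L}(\hat{Z}_C\mid Y=1)=\mathcal{L}(\hat{Z}_C\mid Y=0)=\mathcal{L}(Z_C\mid Y=0)$. In the generative model \eqref{eq:generative_process}, $C$ has the same role in both classes (it is the common factor and is not affected by $Y$), so I take $Z_C\indep Y$. This yields $\mathcal{L}(\hat{Z}_C\mid Y=1)=\mathcal{L}(Z_C\mid Y=1)$. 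Combining with characteristic functions on $\{Y=1\}$,
\begin{equation*}
\phi_{Z_S}\,\phi_{Z_C}=\phi_Z=\phi_{\hat{Z}_S}\,\phi_{\hat{Z}_C}=\phi_{\hat{Z}_S}\,\phi_{Z_C}.
\end{equation*}
Cancelling $\phi_{Z_C}$ gives $\hat{Z}_S\overset{Law}{=}Z_S$; this needs $\phi_{Z_C}$ nonvanishing on a dense set, or an analyticity argument. So both learned codes have the correct laws, and in particular $\hat{Z}_C$ is supported in $\mathrm{Im}(\varphi_C)\subseteq V_C$, and $\hat{Z}_S$ in $\mathrm{Im}(\varphi_S)\subseteq V_S$.

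The final step upgrades equality in law to pointwise equality. Support inclusion gives $\hat{Z}_C\in V_C$ almost surely, hence $D=Z_C-\hat{Z}_C\in V_C$. Likewise $\hat{Z}_S\in V_S$ almost surely, hence $D=\hat{Z}_S-Z_S\in V_S$. Therefore $D\in V_S\cap V_C=\{0\}$ by the direct-sum assumption \eqref{eq:additive_struct}, which gives $\hat{Z}_S=Z_S$ and $\hat{Z}_C=Z_C$.

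The main obstacle is this last step. Support inclusion from equality in law yields $\hat{Z}_S\in V_S$ only almost surely and only if $V_S$ is closed, which holds since it is a linear subspace. The delicate point is that this subspace membership is exactly where the direct-sum structure is used. If the characteristic-function cancellation fails (e.g.\ $\phi_{Z_C}$ vanishes on an open set), I would instead read the statement as implicitly requiring the learned codes to take values in the respective factor subspaces. That is the natural meaning of a "decomposition" of $\mathrm{Im}(f_\theta)=\mathrm{Im}(\varphi_S)\oplus\mathrm{Im}(\varphi_C)$, and with it the argument $D\in V_S\cap V_C=\{0\}$ goes through directly. In that case I would make this extra hypothesis explicit in the proof.
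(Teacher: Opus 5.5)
Your proof is correct. The first half coincides with the paper's Step 1, but the upgrade from equality in law to almost-sure equality takes a genuinely different route.

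As in the paper, you get $\hat Z_C=Z_C$ on $\{Y=0\}$ and transfer this to all of $Y$ through $\hat Z_C\indep Y$ and $Z_C\indep Y$. You then deconvolve $\phi_{\hat Z_S}\phi_{\hat Z_C}=\phi_{Z_S}\phi_{Z_C}$; the paper takes your analyticity option, assuming a moment-generating function (bounded tokens).

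From there the paper never looks at supports. It writes the joint characteristic function of $(\hat Z_S,\hat Z_C)=(Z_S+L,\,Z_C-L)$, differentiates in $u$, and sets $u=v=t$. This gives $\mathbb{E}[L\,e^{i\langle t,Z\rangle}]=0$ for all $t$, i.e.\ $\mathbb{E}[L\mid Z]=0$. It concludes because $L$ is $\sigma(Z)$-measurable: the encoder is deterministic and $Z_S$ is a projection of $Z$. In effect it shows that, under independence, $\mathbb{E}[\hat Z_S\mid Z]$ is fixed by the marginals, so two independent decompositions with equal marginals that are both functions of $Z$ must agree. The direct sum is used only to make $Z_S$ a function of $Z$, at the price of a first-moment assumption and determinism of the encoder.

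You instead use the linear geometry. Equality in law puts $\hat Z_S\in V_S$ and $\hat Z_C\in V_C$ almost surely, so $D\in V_S\cap V_C=\{0\}$; the trivial intersection is exactly the paper's formal hypothesis $\mathbb{R}^d=\mathcal S\oplus\mathcal C$. This is shorter and needs neither moments nor a deterministic encoder.

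Your fallback hypothesis is unnecessary, and in fact so is the deconvolution:
\begin{itemize}
\item Once $\hat Z_C\in\mathcal C$ a.s.\ (which uses only $\hat Z_C\overset{Law}{=}Z_C$), apply the projection $P_{\mathcal S}$ onto $\mathcal S$ along $\mathcal C$ to $Z=\hat Z_S+\hat Z_C$. This gives $Z_S=P_{\mathcal S}\hat Z_S$.
\item Hence $D=\hat Z_S-P_{\mathcal S}\hat Z_S$ is a function of $\hat Z_S$, so it is independent of $\hat Z_C$, and $Z_C=\hat Z_C+D$.
\item Then $\phi_{Z_C}=\phi_{Z_C}\phi_D$, so $\phi_D\equiv 1$ on a neighbourhood of $0$.
\item This forces $D=0$ a.s.: $\langle t,D\rangle\in 2\pi\mathbb{Z}$ a.s.\ for all small $t$, and rescaling $t$ gives $\langle t,D\rangle=0$.
\end{itemize}
No analyticity assumption is needed at all.
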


To provide a rigorous proof, we restate the conditions from the main text as formal hypotheses:
\begin{enumerate}[label=\textbf{H\arabic*}]
    \item \textbf{(Latent Direct Sum):} $Z = Z_S + Z_C$, where $Z_S \in \mathcal{S}$, $Z_C \in \mathcal{C}$, and $\mathbb{R}^d = \mathcal{S} \oplus \mathcal{C}$.
    \item \textbf{(Latent Independence):} $Z_S \indep Z_C$ and $\mathbb{P}(Z_S=0) > 0$.
    \item \textbf{(Code Decomposition):} The latent codes additively reconstruct $Z$: $Z = \hat{Z}_S + \hat{Z}_C$.
    \item \textbf{(Code Independence):} The learned factors are independent: $\hat{Z}_S \indep \hat{Z}_C$.
    \item \textbf{(Pinning):} The absence of the attribute in the signal implies a zero code: $\hat{Z}_S = 0 \iff Z_S = 0$.
\end{enumerate}

\begin{proof}
The proof is in two steps. In the first step, we identify the marginal laws of $\hat{Z}_S$ and $\hat{Z}_C$ as the laws of $Z_S$ and $Z_C$, respectively. In the second step, we extend the equality in law to equality \textit{almost surely}, by showing that the leakage variable $L \coloneqq \hat{Z}_S - Z_S$ is almost surely zero. Note that by the additive structure (\textbf{H3}), we have $\hat{Z}_C - Z_C = -L$ a.s.

\medskip\noindent\textbf{Step 1: Identification of marginals.}

We first establish that the learned factors match the true marginal distributions.

Consider the event $\{Y=0\}$. By the pinning assumption \textbf{H5}, $\hat{Z}_S=0$. Consequently, conditioned on $Y=0$, we have $Z = \hat{Z}_C$.

By the data generation assumption, $Y=0 \implies Z_S=0$, so, conditioned on $Y=0$, we have $Z = Z_C$.

Thus, conditioned on $Y=0$, we have $\hat{Z}_C = Z_C$. We now generalize to the unconditional result.
\begin{itemize}
    \item By \textbf{H4}, $\hat{Z}_C \indep \hat{Z}_S$. Since $Y = \mathds{1}\{\hat{Z}_S \neq 0\}$ is a function of $\hat{Z}_S$, we have $\hat{Z}_C \indep Y$.
    \item By \textbf{H2}, $Z_C \indep Z_S$. Since $Y = \mathds{1}\{S \neq 0\}$ is function of $S$, we have  $Z_C \indep Y$.
\end{itemize}
Since $\hat{Z}_C = Z_C$ on $\{Y=0\}$, the two previous points allow us to extend the equality in law to all values of $Y$:
\begin{equation}
    \hat{Z}_C \overset{Law}{=} Z_C
\end{equation}

We know show equality of the marginal laws of $Z_S$ and $\hat{Z}_S$.

We denote by $\phi_X(t) = \mathbb{E}[e^{i\langle X; t\rangle}]$ the characteristic function of a variable $X$. Since $Z = \hat{Z}_S + \hat{Z}_C = Z_S + Z_C$ involves sums of independent variables, the characteristic functions satisfy $\phi_{\hat{Z}_S}\phi_{\hat{Z}_C} = \phi_{Z_S}\phi_{Z_C}$.

A characteristic function is continuous on $\mathbb{R}$ and takes the value 1 at 0. Therefore, all characteristic functions are non-zero around 0. Thus, on a neighborhood $U$ around 0, we have:
\begin{equation}
    \forall t \in U, \phi_{\hat{Z}_S}(t) = \phi_{Z_S}(t)
\end{equation}
Assuming the encoded attributes possess a moment-generating function (satisfied by the bounded DINO-token space), the characteristic functions are analytic, and the identity extends to $\mathbb{R}^d$ by the identity theorem for analytic functions. Thus, we have:
\begin{equation}
    \hat{Z}_S \overset{Law}{=} Z_S
\end{equation}

\medskip\noindent\textbf{Step 2: Vanishing leakage via gradients.}
We leverage the independence of the learned codes to constrain the leakage variable $L= \hat{Z}_S - Z_S$. We will first show that it's zero \textit{a.s.} conditioned on $Z$, then show it's zero \textit{a.s.} unconditionnaly. 

By independence \textbf{H4}, for all ${u, v \in \mathbb{R}^d}$:
\begin{equation}
    \mathbb{E}\left[ e^{i\langle u, \hat{Z}_S \rangle + i\langle v, \hat{Z}_C \rangle} \right] = \phi_{\hat{Z}_S}(u)\phi_{\hat{Z}_C}(v)
\end{equation}
Substituting the relations $\hat{Z}_S = Z_S + L$ and $\hat{Z}_C = Z_C - L$ on the LHS, and using the marginal identification from Step 1 to replace the RHS characteristic functions with those of the true factors, we get:
\begin{equation}
    \label{eq:char_expansion}
    \mathbb{E}\left[ e^{i\langle u, Z_S \rangle + i\langle v, Z_C \rangle} e^{i\langle u-v, L \rangle} \right] = \phi_{Z_S}(u)\phi_{Z_C}(v)
\end{equation}
We differentiate both sides with respect to $u$. Assuming finite first moments, we apply the gradient $\nabla_u$ under the expectation:
\begin{align*}
    \nabla_u \text{LHS} &= \nabla_u\mathbb{E}\left[ e^{i\langle u, Z_S \rangle + i\langle v, Z_C \rangle} e^{i\langle u-v, L \rangle} \right] \\
    &=\mathbb{E}\left[\nabla_u e^{i\langle u, Z_S \rangle + i\langle v, Z_C \rangle} e^{i\langle u-v, L \rangle} \right]\\
    &=\mathbb{E}\left[ (iZ_S + iL) \, e^{i\langle u, Z_S \rangle + i\langle v, Z_C \rangle} e^{i\langle u-v, L \rangle} \right] \\
    &=i \mathbb{E}\left[ (Z_S + L) \, e^{i\langle u, Z_S \rangle + i\langle v, Z_C \rangle} e^{i\langle u-v, L \rangle} \right]\\
    \nabla_u \text{RHS} &= \nabla_u \left[\phi_{Z_S}(u) \, \phi_{Z_C}(v)\right] \\
    &=  \nabla_u \left[\phi_{Z_S}(u)\right]\phi_{Z_C}(v)\\
    &=  \mathbb{E}[iZ_Se^{i\langle u, Z_S \rangle}] \phi_{Z_C}(v)\\
    &=  i\mathbb{E}[iZ_Se^{\langle u, Z_S \rangle}] \phi_{Z_C}(v)
\end{align*}
Dividing by $i$ on both sides we get:
\begin{align}
    &\ \ \mathbb{E}\left[ (Z_S + L) \, e^{i\langle u, Z_S \rangle + i\langle v, Z_C \rangle} e^{i\langle u-v, L \rangle} \right] \\
    = &\ \ \mathbb{E}[Z_Se^{\langle u, Z_S \rangle}] \phi_{Z_C}(v)
\end{align}
We evaluate this equation on the hyperplane $u = v = t$. The leakage term in the exponent vanishes ($e^{i\langle 0, L \rangle} = 1$), yielding:
\begin{equation}
    \label{eq:gradient_eval}
    \mathbb{E}\left[ (Z_S + L) \, e^{i\langle t, Z_S + Z_C \rangle} \right] = \mathbb{E}[Z_Se^{i\langle t, Z_S \rangle}] \phi_{Z_C}(t)
\end{equation}
Using the independence $Z_S \indep Z_C$, the RHS can be rewritten as $\mathbb{E}[Z_Se^{i\langle t, Z_S \rangle} e^{i\langle t, Z_C \rangle}]$.
Subtracting the RHS from the LHS in \eqref{eq:gradient_eval}, we obtain:
\begin{equation}
    \mathbb{E}\left[ (Z_S + L) \, e^{i\langle t, Z_S + Z_C \rangle}  - Z_Se^{i\langle t, Z_S \rangle} e^{i\langle t, Z_C \rangle}\right] =0\
\end{equation}
The terms $Z_Se^{i\langle t, Z_S \rangle} e^{i\langle t, Z_C \rangle}$ cancel out, leaving:
\begin{align}
        \forall t \in \mathbb{R}^d, \quad \mathbb{E}\left[ L \, e^{i\langle t, Z_S+Z_C \rangle} \right] = 0\\
        \mathbb{E}\left[ L \, e^{i\langle t, Z \rangle} \right] = 0\\
         \mathbb{E}\left[ \mathbb{E} \left[L \, e^{i\langle t, Z \rangle}|Z \right]\right] = 0\\
         \int_{\mathbb R^d} \mathbb{E} \left[L \, e^{i\langle t, z \rangle}|Z=z \right] d\mathbb P_Z(z)&=0
\end{align}
In the integral above, $d\mathbb{P}_Z(z)$ is the probability measure of $Z$. We define a new signed measure $\mu$ such that ${d\mu(z) = \mathbb{E}[L \mid Z=z] d\mathbb{P}_Z(z)}$.

In this case, 
\begin{equation}
    \int e^{i\langle t, z \rangle} d\mu(z) = 0
\end{equation}
is the Fourier-Stieltjes transform of the measure $\mu$.
By the uniqueness of the Fourier transform, if the transform is zero for all $t$, then the measure $\mu$ must be zero everywhere. 

For $\mu$ to be the zero measure, its density with respect to $\mathbb{P}_Z$ must vanish for $\mathbb{P}_Z$-almost every $z$:
\begin{equation}
    \mathbb{E}[L \mid Z] = 0 \quad  \mathbb{P}_Z\text{--almost surely.}
\end{equation}
Because $\hat{Z}_S$ is produced by a deterministic encoder $E_{\theta_E}(Z)$ and $Z_S$ is a deterministic projection of $Z$ , the difference $L = \hat{Z}_S - Z_S$ is perfectly known once $Z$ is known (i.e., it is $\sigma(Z)$-measurable).

Therefore, $\mathbb{E}[L \mid Z] = L$ a.s.

Combined with the Fourier result $\mathbb{E}[L \mid Z] = 0$, we conclude $L = 0$ almost surely. Therefore, we have the equality \textit{a.s.} between the true factors and the encoded factors.
\end{proof}

\subsection{Proof of Proposition \ref{prop:control}}\label{app:control_proof}
Assume that the generator $G_{\psi}$ is deterministic given $(Z, \varepsilon)$, and let $\hat{X}=G_{\psi}(Z,\varepsilon)$ with corresponding attributes $\hat{S}$ and $\hat{C}$. Then for any $(\hat{Z}_S, \hat{Z}_C)$ that are functions of $Z$, the control over the salient attribute is lower-bounded by:
\begin{equation}
I(\hat{S}; \hat{Z}_S) \ge \underbrace{I(S; \hat{S})}_{\text{Fidelity}} - \underbrace{I(\hat{S}; \varepsilon | Z)}_{\text{Noise dependence}} - \underbrace{I(S; \hat{Z}_C | \hat{Z}_S)}_{\text{Entanglement}}
\label{eq:control_decomposition}
\end{equation}
Similarly, for the common context:
\begin{equation}
I(\hat{C}; \hat{Z}_C) \ge I(C; \hat{C}) - I(\hat{C}; \varepsilon | Z) - I(C; \hat{Z}_S | \hat{Z}_C)
\end{equation}
\begin{proof}
We prove the result for $S$ and $Z_C$. The proof for $C$ and $Z_C$ is analogous.\\

\textbf{Step 1: Decomposition.} 
We start with the identity $I(\hat{S}; \hat{Z}_S) = H(\hat{S}) - H(\hat{S}|\hat{Z}_S)$.
We first decompose the conditional entropy term. Using the definition of conditional mutual information, we write:
\begin{equation}
    H(\hat{S}|\hat{Z}_S) = I(\hat{S}; \hat{Z}_C | \hat{Z}_S) + H(\hat{S} | \hat{Z}_S, \hat{Z}_C).
\end{equation}
Since the pair $(\hat{Z}_S, \hat{Z}_C)$ uniquely determines the conditioning $Z$, we have $H(\hat{S} | \hat{Z}_S, \hat{Z}_C) = H(\hat{S}|Z)$. Furthermore, because $\hat{S}$ is determined by $Z$ and $\varepsilon$, any remaining entropy in $\hat{S}$ given $Z$ is attributable entirely to the noise $\varepsilon$:
\begin{equation}
    H(\hat{S}|Z) = I(\hat{S}; \varepsilon | Z) + H(\hat{S}|Z, \varepsilon) = I(\hat{S}; \varepsilon | Z),
\end{equation}
where $H(\hat{S}|Z, \varepsilon) = 0$ due to the determinism assumption. Substituting these back into the entropy identity yields:
\begin{equation}
    I(\hat{S}; \hat{Z}_S) = H(\hat{S}) - I(\hat{S}; \hat{Z}_C | \hat{Z}_S) - I(\hat{S}; \varepsilon | Z).
    \label{eq:decomp_step1}
\end{equation}

\textbf{Step 2: Bounding the Entanglement.}
We now bound the entanglement term $I(\hat{S}; \hat{Z}_C | \hat{Z}_S)$ to introduce the true attribute $S$. By the chain rule of mutual information, adding a variable cannot decrease the information:
\begin{equation}
    I(\hat{S}; \hat{Z}_C | \hat{Z}_S) \le I(\hat{S}, S; \hat{Z}_C | \hat{Z}_S).
\end{equation}
Expanding the right-hand side via the chain rule:
\begin{equation}
    I(\hat{S}, S; \hat{Z}_C | \hat{Z}_S) = I(S; \hat{Z}_C | \hat{Z}_S) + I(\hat{S}; \hat{Z}_C | \hat{Z}_S, S).
\end{equation}
We bound the second term using the property that mutual information is bounded by discrete entropy, $I(X;Y|Z) \le H(X|Z)$:
\begin{equation}
    I(\hat{S}; \hat{Z}_C | \hat{Z}_S, S) \le H(\hat{S} | \hat{Z}_S, S) \le H(\hat{S}|S).
\end{equation}
Combining these inequalities gives:
\begin{equation}
    I(\hat{S}; \hat{Z}_C | \hat{Z}_S) \le I(S; \hat{Z}_C | \hat{Z}_S) + H(\hat{S}|S).
    \label{eq:bound_step2}
\end{equation}

\textbf{Step 3: Rearrangement.}
Substituting the bound from \eqref{eq:bound_step2} into \eqref{eq:decomp_step1}:
\begin{equation}
    I(\hat{S}; \hat{Z}_S) \ge H(\hat{S}) - \left[ I(S; \hat{Z}_C | \hat{Z}_S) + H(\hat{S}|S) \right] - I(\hat{S}; \varepsilon | Z).
\end{equation}
We group the entropy terms. Recalling that $I(S; \hat{S}) = H(\hat{S}) - H(\hat{S}|S)$, we obtain the final lower bound:
\begin{equation}
    I(\hat{S}; \hat{Z}_S) \ge \underbrace{I(S; \hat{S})}_{\text{Fidelity}} - \underbrace{I(\hat{S}; \varepsilon | Z)}_{\text{Noise Dependence}} - \underbrace{I(S; \hat{Z}_C | \hat{Z}_S)}_{\text{Entanglement}}.
\end{equation}



Additionnaly, by the data-processing theorem, since $\hat{S}$ is a function of $\hat{X}$, we have $I(\hat{S}; \varepsilon | Z) \le I(\hat{X}; \varepsilon | Z)$.

Therefore:
\begin{equation}
    I(\hat{S}; \hat{Z}_S) \geq \underbrace{I(S; \hat{S})}_{\text{Fidelity}} - \underbrace{I(\hat{X}; \varepsilon | Z)}_{\text{Noise Dependence}} - \underbrace{I(S; \hat{Z}_C | \hat{Z}_S)}_{\text{Entanglement}}
\end{equation}
\end{proof}

\subsection{Link Between Cycle consistency and Independence}
\label{app:cycle}
In Sec.~\ref{sec:theory}, we introduce a cycle consistency loss $\mathcal{L}_{cycle}$ and claim that it helps enforce independence between latent codes $\hat{Z}_S$ and $\hat{Z}_C$. More precisely, we claim the following:

\begin{proposition}
Let $\hat{Z}_C^a$ and $\hat{Z}_S^b$ be latent codes extracted from independently sampled images $a$ and $b$ with independent generative factors $C$ and $S$. Define the mixed latent code
\[
Z^{\mathrm{mix}} = \hat{Z}_C^a + \hat{Z}_S^b,
\]
and let 
\[
(\hat{Z}_S^{\mathrm{mix}}, \hat{Z}_C^{\mathrm{mix}}) = E_{\theta_E}(Z^{\mathrm{mix}})
\]
be the separator outputs. Then, minimizing the cycle-consistency loss
\[
\mathcal{L}_{\mathrm{cyc}} = 
\mathbb{E}\Big[\|\hat{Z}_C^{\mathrm{mix}} - \hat{Z}_C^a\|_2^2 + \|\hat{Z}_S^{\mathrm{mix}} - \hat{Z}_S^b\|_2^2 \Big]
\]
enforces statistical independence between $\hat{Z}_S$ and $\hat{Z}_C$.
\end{proposition}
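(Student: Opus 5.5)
The plan is to read ``minimizing $\mathcal{L}_{\mathrm{cyc}}$'' as reaching its global minimum, zero. First we derive a functional identity for the separator from this. Then we turn that identity into equality in law between $(\hat{Z}_S,\hat{Z}_C)$ and the product of its marginals. Write $E_{\theta_E}=(E_S,E_C)$ and let $P_{S}$, $P_{C}$ be the marginal laws of the learned codes. We assume throughout that $\mathcal{L}_{rec}=0$, i.e.\ $E_S(z)+E_C(z)=z$ on the support of $Z$.

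\textbf{Step 1: a.s.\ fixed-point identity.} Since $a$ and $b$ are sampled independently, $(\hat{Z}_S^b,\hat{Z}_C^a)$ has the product law $P_S\otimes P_C$. The loss is a sum of nonnegative terms, so $\mathcal{L}_{\mathrm{cyc}}=0$ gives
\[
E_C(c+s)=c,\qquad E_S(c+s)=s \qquad \text{for } (P_S\otimes P_C)\text{-a.e. } (s,c).
\]
In words, the map $\Sigma:(s,c)\mapsto s+c$ has a left inverse $E_{\theta_E}$ almost everywhere on the product support. In particular $\Sigma$ is injective there, and $E_{\theta_E}\circ\Sigma=\mathrm{id}$ $(P_S\otimes P_C)$-a.s.

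\textbf{Step 2: from swaps to independence.} We need to compare the joint law $P_{(S,C)}$ of $(\hat{Z}_S,\hat{Z}_C)=E_{\theta_E}(Z)$ with $P_S\otimes P_C$. By reconstruction, $Z=\Sigma(\hat{Z}_S,\hat{Z}_C)$, so $P_{(S,C)}=E_{\theta_E\#}P_Z$ with $P_Z=\Sigma_\#P_{(S,C)}$. By Step 1, $P_S\otimes P_C=E_{\theta_E\#}(\Sigma_\#(P_S\otimes P_C))$, i.e.\ it is the image under $E_{\theta_E}$ of the law of $Z^{\mathrm{mix}}$. Independence therefore reduces to showing $Z^{\mathrm{mix}}\overset{Law}{=}Z$. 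This is exactly the ``valid counterfactual'' requirement the paper flags in Sec.~\ref{subsec:separator_training}.

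\textbf{Step 3: the main obstacle.} Cycle consistency alone does not force $Z^{\mathrm{mix}}\overset{Law}{=}Z$. For example, take $\hat{Z}_S=\hat{Z}_C$-correlated codes living on a product of supports where $\Sigma$ is injective: the cycle loss is zero but the joint is not a product. I would therefore state the result with an added hypothesis. The preferred form is the distributional condition $Z^{\mathrm{mix}}\overset{Law}{=}Z$, which the EMA re-encoding/implicit discriminator of App.~\ref{app:separation_details} is meant to enforce. Under it, Step 2 immediately gives $P_{(S,C)}=E_{\theta_E\#}P_Z=E_{\theta_E\#}P_{Z^{\mathrm{mix}}}=P_S\otimes P_C$, which is independence.

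A natural alternative would be to require the support of $(\hat{Z}_S,\hat{Z}_C)$ to already be a product set, together with equality of densities. I reject this because it essentially presupposes independence.

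I would also make explicit what holds without the extra assumption. Cycle consistency yields the weaker but genuine statement that the separator is a consistent inverse on the whole product of marginal supports. Hence any dependence between the codes can only come from the data distribution, not from the encoder mixing information across codes. This is the precise sense in which $\mathcal{L}_{\mathrm{cyc}}$ ``pushes toward'' independence.
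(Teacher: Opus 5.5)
Your argument is correct and is essentially the paper's proof. Zero cycle loss gives $E_{\theta_E}(Z^{\mathrm{mix}})=(\hat{Z}_S^b,\hat{Z}_C^a)$, which has the product law. The paper then invokes the same extra assumption $Z^{\mathrm{mix}}\overset{Law}{=}Z$ to transfer that law to $E_{\theta_E}(Z)=(\hat{Z}_S,\hat{Z}_C)$, and your counterexample rightly shows this assumption cannot be dropped.

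Two small points. First, your standing assumption $\mathcal{L}_{rec}=0$ is never actually used in the pushforward argument. Second, your closing claim that dependence ``can only come from the data'' is too strong. Take $Z_S=(x,0)$, $Z_C=(0,y)$ with $x\indep y$, and $E_{\theta_E}(z)=\bigl((z_1-z_2,0),(z_2,z_2)\bigr)$. This encoder reconstructs exactly and has zero cycle loss, yet its codes are dependent even though the true factors are independent.
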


\begin{proof}
Given $Z^{\mathrm{mix}} = \hat{Z}_C^a + \hat{Z}_S^b$, since $a$ and $b$ are independent, it follows that also $\hat{Z}_C^a$ and $\hat{Z}_S^b$ are independent: $\hat{Z}_C^a \indep \hat{Z}_S^b$. If the cycle consistency loss is minimized, namely $\hat{Z}_S^{mix} = \hat{Z}_S^b$ and $\hat{Z}_C^{mix} = \hat{Z}_C^a$, then we also obtain $\hat{Z}_C^{mix} \indep \hat{Z}_S^{mix}$. Finally, assuming that mixed latents follow the same law as real samples ($\hat{Z}^{mix} \overset{Law}{=} Z$), we have $E_{\theta_E}(Z_{mix}) \overset{Law}{=}  E_{\theta_E}(Z)$. Since $E_{\theta_E}(Z_{mix}) = (\hat{Z}_S^{mix},\hat{Z}_C^{mix}) = (\hat{Z}_S^b, \hat{Z}_C^a)$ and $E_{\theta_E}(Z)=(\hat{Z}_S,\hat{Z}_C)$, we obtain $(\hat{Z}_S, \hat{Z}_C) \overset{Law}{=} (\hat{Z}_S^a, \hat{Z}_C^b)$, which entails that  $\hat{Z}_S \indep \hat{Z}_C$.
\end{proof}

The assumption $\hat{Z}^{mix} \overset{Law}{=} Z$ is not enforced with a loss, but is rather a consequence of our training paradigm for cycle consistency, as explained in App.~\ref{app:separation_details}.

\section{Implementation Details for Fine-Grained Control}
\label{app:implementation}

 \subsection{A Diffusion-based Conditioning Architecture}
\label{sec:finegrain}
\paragraph{Conditioning as the Primary Signal.}
To strictly control salient and common factors via the conditioning $Z$, we must ensure that $Z$ is the sole driver of semantic content, minimizing information leakage into the noise $\varepsilon$ used for generation (see Figure~\ref{fig:factors}). 
Standard prompt-based methods inject features into frozen text-to-image backbones, often failing to override the base model's priors and leading to identity loss or style drift (Fig.~\ref{fig:prompt_failure}). Consequently, as stated in Sec.~\ref{subsec:conditioning_pipeline}, we train a generative model \emph{from scratch} using image-only conditioning, derived from DINOv3 \cite{simeoni2025dinov3} embeddings, consisting in $K$ tokens. We adopt a Flow Matching objective, which enforces a deterministic mapping between noise and data, ensuring that visual fidelity is entirely determined by the structure of $Z$ rather than stochastic sampling paths.

\subsubsection{Architecture for Low Noise Dependence}
\label{app:subsec:conditioning_pipeline}

The generator $G_\psi$ is built conditioned on a structured token sequence $Z$ (Fig.~\ref{fig:combined_pipeline_and_query}). It operates on VAE latents and is trained using Optimal Transport Conditional Flow Matching (OT-CFM) \cite{lipman2022flow, tong2024improving}, which encourages straight trajectories in the latent space.

\paragraph{Latent Flow Matching.}Let $u_1 \in \mathbb{R}^d$ denote the image latents encoded by a frozen VAE. We define a linear probability path interpolating between a source distribution $p_0$ (standard Gaussian noise) and the target data distribution $p_1$ (image latents):
\begin{equation}
     u_t = (1-t)u_0 + t u_1, \quad t \in [0,1]
\end{equation}
where $u_0 \sim \mathcal{N}(0, I)$ and $u_1$ is the data latent. We train a U-Net vector field estimator $v_\psi$ to predict the velocity of this path, minimizing the flow matching objective:
\begin{equation}
\label{eq:flow_matching_loss}
     \mathcal{L}_{\text{FM}} = \mathbb{E}_{t, u_0, u_1} \Bigl[ \bigl\| v_\psi(u_t, t, Z) - (u_1 - u_0) \bigr\|^2 \Bigr].
\end{equation}
This formulation yields a deterministic ODE solver at inference time, enabling higher fidelity reconstruction and more stable latent manipulations than stochastic sampling.

\paragraph{Structured Token Conditioning.}
The conditioning signal $Z \in \mathbb{R}^{K\times d_{cond}}$ must capture both high-level semantics and low-level appearance. We construct $Z$ by concatenating two complementary token streams:

\begin{itemize}[leftmargin=*, topsep=0pt]
\item \textbf{Semantic Tokens $T$}: We extract features $F_{\text{dino}}$ from a frozen DINOv3 encoder. A lightweight cross-query module with $K-1$ learnable queries base on Perceiver-IO \cite{jaegle2021perceiver} compresses these features into a fixed-length sequence $T \in \mathbb{R}^{(K-1) \times d_{\text{cond}}}$. This distills spatial semantics (e.g., ``eyes'', ``texture'') into a compact representation. This pipeline is illustrated in Fig. \ref{fig:cross_query} with further details in Tab. \ref{tab:cross_query}.
\item \textbf{Color Token} $t_{\text{col}}$: Diffusion models have trouble recovring low-frequency information \cite{zhang2025antiexposure, shum2025color}. Additionally, the distilled DINO features can lose color information. We introduce a single token $t_{\text{col}}$ derived from a shallow CNN encoder trained to recover histogram information and mitigate color-shifting. The specification are described in Tab. \ref{tab:color_encoder}.
\end{itemize}
\begin{figure}[h]
    \centering
    \includegraphics[width=0.98\linewidth]{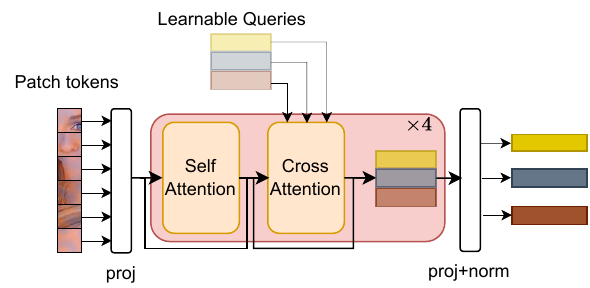}
    \caption{Diagram of our Cross-query extractor module}
    \label{fig:cross_query}
\end{figure}
The final conditioning $Z := [T; t_{\text{col}}]$ is injected via cross-attention. This design forces the flow model to rely on $T$ for structure and $t_{\text{col}}$ for colorimetric information. 
\begin{figure}[t]
    \centering
\includegraphics[width=0.99\linewidth]{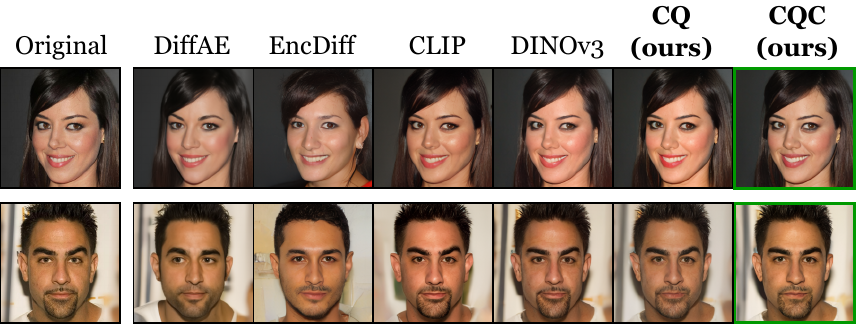}
    \caption{Reconstruction from random noise for different conditioning encoders. Our conditioning preserves identity, local details, and color. Zoom-in for details. Additional results in App. \ref{app:cond_qualitative}.}
    \label{fig:recon_grid}
\end{figure}


\subsubsection{Empirical Validation of Structural Assumptions}

We evaluate our conditioning architecture against several established baselines: DiffAE~\cite{preechakul2022diffusion}, which utilizes a ResNet to produce a single 512-d conditioning vector; EncDiff~\cite{yang2024diffusion}, which trains a CNN to generate $32 \times 128$ tokens; and projected variants of CLIP and DINOv3. For the latter, we utilize all last-layer activations (197 and 261 tokens respectively) and project each to 128-$d$ before conditioning. All models are trained on FFHQ~\cite{karras2019style} and tested on CelebA-HQ~\cite{liu2015faceattributes}.

\noindent \textbf{High-Fidelity Reconstruction.} As summarized in Tab.~\ref{tab:ffhq_comparison}, our Cross-Query and Color (CQC) conditioning achieves a very good LPIPS and FID, as well as a correct SSIM: this indicates that the model successfully reconstructs the major semantic attributes of our image, and relies on the noise to retrieve high-frequency textural details (e.g. \textit{hair}).  Visual results in Figure~\ref{fig:recon_grid} confirm that our model accurately preserves identity and fine structural details when sampling from random noise. Importantly, the inclusion of the color token $t_{col}$ mitigates the histogram mismatch that can be observed in vanilla Cross-Query reconstructions. See App.~\ref{app:cond_quant} for exhaustive reconstruction metrics across multiple datasets and App.~\ref{app:cond_qualitative} for further qualitative analysis.

\noindent \textbf{Latent Manipulation.} To assess the geometric and semantic properties of the conditioning space $Z$, we perform linear operations within the token manifold. 
\begin{itemize}[leftmargin=*,topsep=2pt,itemsep=1pt]
    \item \textbf{Interpolation}: Fig.~\ref{fig:recon_and_interp} (Right) shows that linear interpolation between $Z$ codes of two identities yields smooth, artifact-free transitions, indicating a continuous and well-structured representation, compared to DINOv3 which displays a "cross-fading" effect.
    \item \textbf{PCA-based Editing}: We further probe the semantic linearity of the token space by exploring its principal components. As illustrated in Fig.~\ref{fig:interpolation_pca}, moving linearly along these directions induces semantically valid changes in the output images, supporting the linear  assumption of our conditioning space.
\end{itemize}

\begin{figure}
    \centering
    \includegraphics[width=1\linewidth]{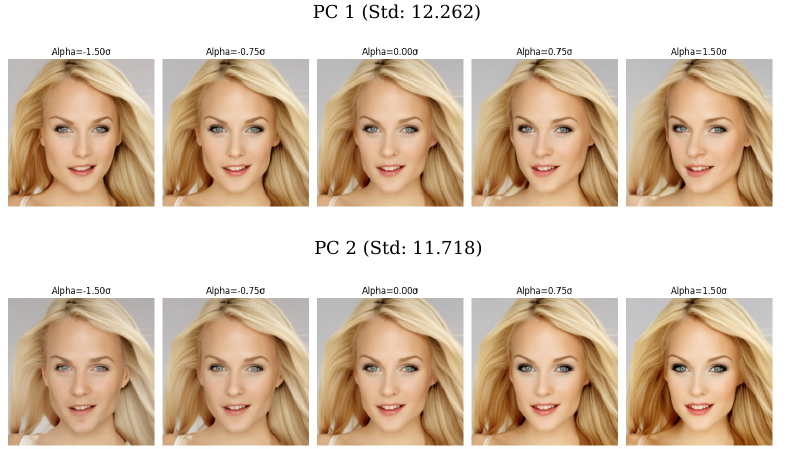}
    \caption{Principal directions of the $Z$-space. The first one corresponds to head position, the second one to gender.}
    \label{fig:interpolation_pca}
\end{figure}

These results illustrate that our token representation supports the linear and additive assumption done in Sec.\ref{sec:structural_assumption}.



\begin{table*}[h!]
    \centering
    \caption{U-Net Architecture Specifications across datasets}
    \label{tab:unet_arch_comparison}
    \begin{tabular}{cccc}
\toprule
 & \multicolumn{3}{c}{\textbf{Dataset}} \\
\cmidrule(lr){2-4}
\textbf{Parameter} & \textbf{FFHQ} & \textbf{AFHQ} & \textbf{BraTS} \\
\midrule
$f$ & 4 & 8 & 4 \\
$z$-shape & $64 \times 64 \times 3$ & $32 \times 32 \times 4$ & $32 \times 32 \times 4$ \\
$|\mathcal{Z}|$ & 8192 & 4096 & 4096 \\
Channels & 128 & 128 & 128 \\
Depth & 2 & 2 & 2 \\
Channel Multiplier & [1, 2, 4, 4] & [1, 2, 4, 4] & [1, 2, 4, 4] \\
Attention resolutions & [32, 16, 8] & [32, 16, 8] & [32, 16, 8] \\
Head Channels & 64 & 64 & 64 \\
Cross-Attention dim & 128 & 128 & 128 \\
\hline
\# Params & 160M & 160M & 160M \\
Compute & 90 GFlops & 34 GFlops & 34 GFlops \\
\hline
Prediction & $v$-pred & $v$-pred & $v$-pred \\
Training type &  OT Flow-Matching &  OT Flow-Matching &  OT Flow-Matching \\
\hline
Batch Size & 64 & 64 & 64 \\
Optimizer & AdamW & AdamW & AdamW \\
Iterations & 300k & 300k & 300k \\
Learning Rate & 1e-4 & 1e-4 & 1e-4 \\
EMA decay & 0.9999 & 0.9999 & 0.9999 \\
\bottomrule
\end{tabular}
\end{table*}

\begin{table}[h!]
    \centering
    \caption{CrossQueryEncoder Architecture Specifications.}
    \label{tab:cross_query}
    \begin{tabular}{@{}cc@{}}
    \toprule
    Output tokens count & 32 \\
    Output token dim & 128 \\
    Input embedding dim & 768 \\
    Latent dimension ($d$) & 512 \\
    Depth & 4 \\
    Attention heads & 12 \\
    Head dimension & 64 \\
    Feedforward multiplier & 4 \\
    \midrule
    \# Params &  15M \\
    Compute & 2.3 GFlops\\
    \midrule  
    Batch Size & 64 \\
    Optimizer & AdamW \\
    Learning Rate &1e-4 \\
    \bottomrule
    \end{tabular}
\end{table}

\begin{table}[h!]
\centering
\caption{ColorEncoder Architecture}
\label{tab:color_encoder}
\footnotesize
\begin{tabular}{@{}cc@{}cc@{}}
\toprule
\textbf{Layer} & \textbf{Operation} & \textbf{Kernel/Stride} & \textbf{Output Shape} \\
\midrule
Input & RGB Image & -- & $3 \times H \times W$ \\
Conv1 & Conv2D + ReLU & $5 \times 5$ / 2 & $32 \times \frac{H}{2} \times \frac{W}{2}$ \\
Conv2 & Conv2D + ReLU & $5 \times 5$ / 2 & $128 \times \frac{H}{4} \times \frac{W}{4}$ \\
Pool & AdaptiveAvgPool2D & $1 \times 1$ & $128 \times 1 \times 1$ \\
Flatten & Reshape & -- & $128$ \\
FC & Linear & -- & $128$ \\
\bottomrule
\end{tabular}
\end{table}

\subsection{Training setup}
\label{app:trainin_setup_1}
\paragraph{Datasets.}
We use \textbf{FFHQ} \cite{karras2019style} as our main training experiment dataset. Following recommendation from their creator, we use the first 60k images as a training set and the last 10k images as the testing set. Images are recized to 256$\times 256$, and the dataset is processed by the publically available VQ-VAE trained on CelebA-HQ\footnote{\url{https://huggingface.co/CompVis/ldm-celebahq-256}}. Despite FFHQ being notoriously more diverisifed than CelebA, this VQ-VAE model performed better in raw reconstuction than the standard \texttt{sd-vae-ft-mse}.

We use \textbf{CelebA-HQ} \cite{liu2015faceattributes} as a generalization dataset to assess the reconstruction quality of our first stage training. We use the 30k images as the testing set, and resize them to 256$\times 256$. The dataset is processed using the same VQ-VAE as FFHQ.

\begin{figure}
    \centering
    \includegraphics[width=0.98\linewidth]{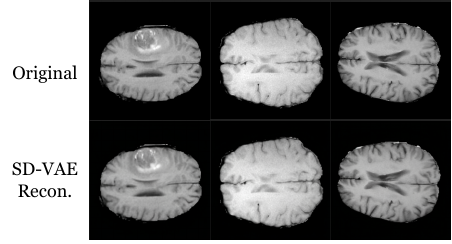}
    \caption{Visual inspection of the BraTS 2023 autoencoding using \texttt{sd-vae-ft-mse}.}
    \label{fig:brats_vae_visu}
\end{figure}
We use \textbf{BraTS 2023} \cite{kazerooni2024brain} as an auxiliary dataset with a different domain. Since we need the segmentation masks to have tumor labels, we decide to re-split the train set (since the validation set does not contain the segmentation labels). To avoid any data leakage, we split patient-wise to get 90/10 train/test splits. We use the T1-normalized MRI volume and apply z-score normalization restricted to non-background voxels using a brain mask. The volume is then sliced axially, and each slice is clipped to a fixed intensity range (-3 to 3), min–max normalized, and rescaled to a range of [0, 255]. Slices that have more than 99.5\% of black pixels are discarded, and remaining slices are replicated across three channels and saved as PNG images. Using the segmentation masks, each slice is labeled with a binary tumor-presence indicator. We use the pre-trained \texttt{sd-vae-ft-mse} as the autoencoder for this dataset. A visual inspection (Figure \ref{fig:brats_vae_visu}) confirms that despite being out-of-domain, this dataset can be easily processed with this autoencoder. The resulting training split consist of 151,824 image of 1,323 patients and the test split consist of 16'830 images of 147 patients. For evaluation, we concentrate on the slices between 85 and 100.

We use \textbf{AFHQ} \cite{choi2020starganv2} as another auxiliary dataset with a different domain than faces. The dataset is comprised of 3 main classes: \texttt{cat}; \texttt{dog}; \texttt{wild}. We resize all 16,130 images to $256\times 256$, and perform geometrical augmentations (random flip and light random crop) during first stage training to mitigate overfitting. We use the \texttt{sd-vae-ft-mse} as the autoencoder for this dataset.

\paragraph{Optimization and Training.} For efficiency, we precompute DINOv3 and VAE features. We jointly train the U-Net, Cross-Query, and Color encoder for 300k steps, using AdamW and a batch size of 32. The learning rate is warmed up from $10^{-6}$ to $10^{-4}$ for 5000 steps, then kept constant at $10^{-4}$ for the rest of the training. All models are trained in mixed precision with \texttt{bfloat16}.
Training was performed on a single NVIDIA H100 GPU, equipped with an Intel Xeon Platinum 8468 CPU with 24 active cores (workers). Training takes about 48 hours on this configuration.

\paragraph{Flow Matching.} The only loss at play in this stage is $\mathcal{L}_{FM}$ defined in Eq. \ref{eq:flow_matching_loss}. We use Minimatch Optimal Transport Flow matching \cite{tong2024improving}, where in each batch, we create data-noise pairs by minimizing the overall OT distance between the two distributions.

\paragraph{Diffusion inference details.}
We sample our images using a simple Euler integration, with 20 steps (unless otherwise noted). For all methods, the same starting noise is used. A study on the effect of random noise initialization can be found in App. \ref{app:cond_qualitative}.

\subsection{Baselines}
\paragraph{DiffAE.}
We use the official DiffAE repository\footnote{https://github.com/konpatp/diffae}, and use their pretrained weights \texttt{ffhq\_256\_autoenc}. The original DiffAE method heavily relies on noise inversion to perfom the edits. However, we saw in Sec. \ref{sec:theory} that to properly control target attributes, we need these attributes to \textit{not} depend on initial noise. Therefore, we generate the images from random noise. This explains most of the performance metric differences with the original paper.

\paragraph{EncDiff.}
The EncDiff architecture differs with our work mainly on how the conditioning tokens are constructed. Therefore, for a fair comparison, we re-empleted the exact EncDiff encoder, and trained it in a similar setting than our DINO-based Cross-Query extractor. This ensures that the difference in reconstruction is due to \textit{conditioning quality}, and not diffusion training differences. We trained EncDiff to produce exactly the same number of tokens than us, resulting in a much tighter bottleneck.

\paragraph{DINOv3 and CLIP.}
For efficiency, we first pre-compute DINOv3 and CLIP embeddings and store them. For CLIP, images were resized to $224\times 224$ for compatibility. All of the last activations were used to condition the diffusion model.
\begin{itemize}[leftmargin=*,topsep=2pt,itemsep=1pt]
    \item For DINOv3: this includes the \textit{CLS} token, the register tokens, and the 256 encoded patch tokens, resulting in $K=261$ tokens.
    \item for CLIP: this includes the \textit{CLS} token and the 197 patch tokens, resulting in $K=198$ tokens.
\end{itemize}
All of the tokens were linearly projected in $d=128$ for consistency across different backbones (except for pre-trained DiffAE which does not condition using cross-attention, and uses $d=512$).
\begin{figure}[t]
    \centering
    \includegraphics[width=0.99\linewidth]{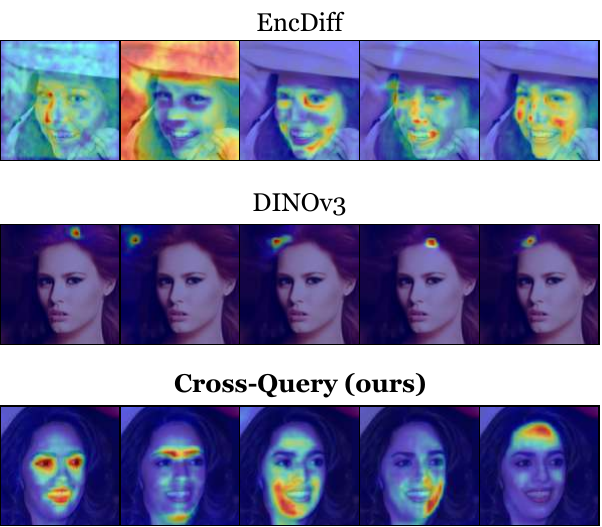}
    \caption{Cross-attention maps for some conditioning tokens, comparing our conditioning to raw DINOv3 and EncDiff. Our tokens attend to large, semantically coherent regions (e.g., facial parts), whereas DINO tokens are overly local and EncDiff tokens diffuse, supporting the claim that our conditioning yields a more structured, interpretable representation.}
    \label{fig:attn_comparison_icml}
\end{figure}
\section{Additional Results for the Conditioning Tokens.}

\subsection{Additional Qualitative Results for Conditioning}
\label{app:cond_qualitative}

 \begin{figure*}[t]
    \centering
    \includegraphics[width=0.99\linewidth]{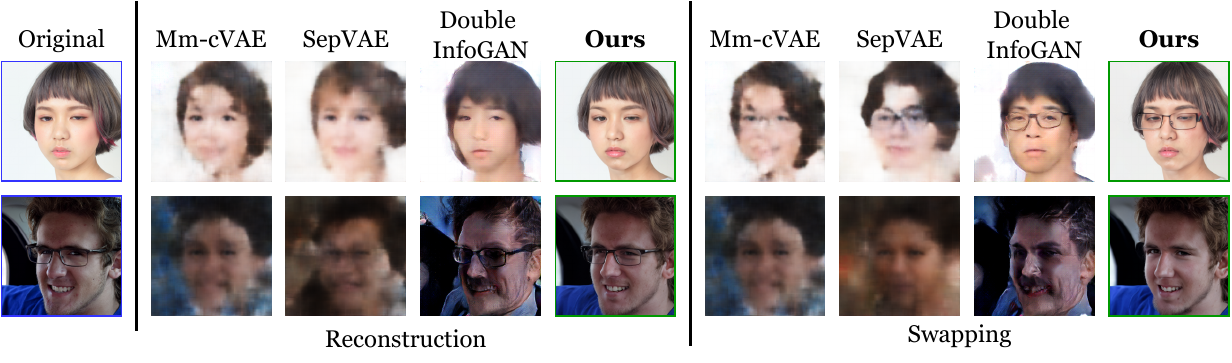}
    \caption{Comparing reconstruction and swapping between Diff-CA (ours) and CA baselines. The baselines succeed in swapping the salient factor, but only generate blurry pictures, undermining the fidelity to the original image. }
    \label{fig:swapping_baselines}
\end{figure*}

\paragraph{Cross-Attention maps.} To understand what kind of spatial or semantic information each conditioning token focuses on during the generation process, we analyze the cross-attention maps within the U-Net. Specifically, we extract the attention matrices, $\text{Softmax}(QK^T/\sqrt{d_k})$, from the cross-attention layers in the final decoder block of the U-Net where our concept tokens interact with the spatial features of the U-Net. These maps are averaged across all sampling timesteps to provide a stable visualization of each token's influence.

Visualizations (Figure \ref{fig:attn_comparison_icml}) show that our cross-query concept tokens produce attention maps that highlight broad semantic regions (e.g., "eyes and mouth", "lower-left face", "eyebrows and chin"). In contrast, raw DINOv3 attention maps tend to be highly localized and correspond to specific image patches. For EncDiff, the attention maps appear more diffuse or less structurally coherent, indicating a less specialized encoding by its tokens.

\paragraph{Cross-Attention maps.} To understand what kind of spatial or semantic information each conditioning token focuses on during the generation process, we analyze the cross-attention maps within the U-Net. Specifically, we extract the attention matrices, $\text{Softmax}(QK^T/\sqrt{d_k})$, from the cross-attention layers in the final decoder block of the U-Net where our concept tokens interact with the spatial features of the U-Net. These maps are averaged across all sampling timesteps to provide a stable visualization of each token's influence.

Visualizations (Figure \ref{fig:attn_comparison_icml}) show that our cross-query concept tokens produce attention maps that highlight broad semantic regions (e.g., "eyes and mouth", "lower-left face", "eyebrows and chin"). In contrast, raw DINOv3 attention maps tend to be highly localized and correspond to specific image patches. For EncDiff, the attention maps appear more diffuse or less structurally coherent, indicating a less specialized encoding by its tokens.

\paragraph{Dependence on starting noise.}
Fig. \ref{fig:starting_noise} shows  compares the dependence on the starting noise of DiffAE vs. our CrossQuery conditioning. Our method achieves better structural reconstruction across various starting noises. This seems to indicate that DiffAE's reconstruction power comes, in part, from the \textit{noise inversion} protocol. Therefore, precise structural edits are not possible with DiffAE by acting solely on the conditioning.

\begin{figure}[h]
    \centering
    \includegraphics[width=0.98\linewidth]{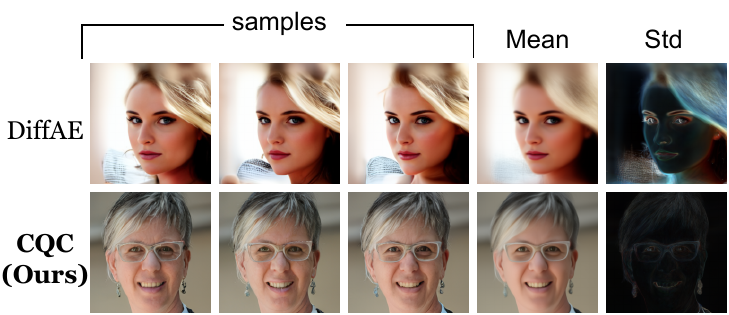}
    \caption{Dependence of the generated images on different starting noises. }
    \label{fig:starting_noise}
\end{figure}

\paragraph{Effect of the color token.} We display the information learned by the color token in Fig.~\ref{fig:color_token_ablation}.
\begin{figure}[t]
    \centering

    \includegraphics[width=0.95\linewidth]{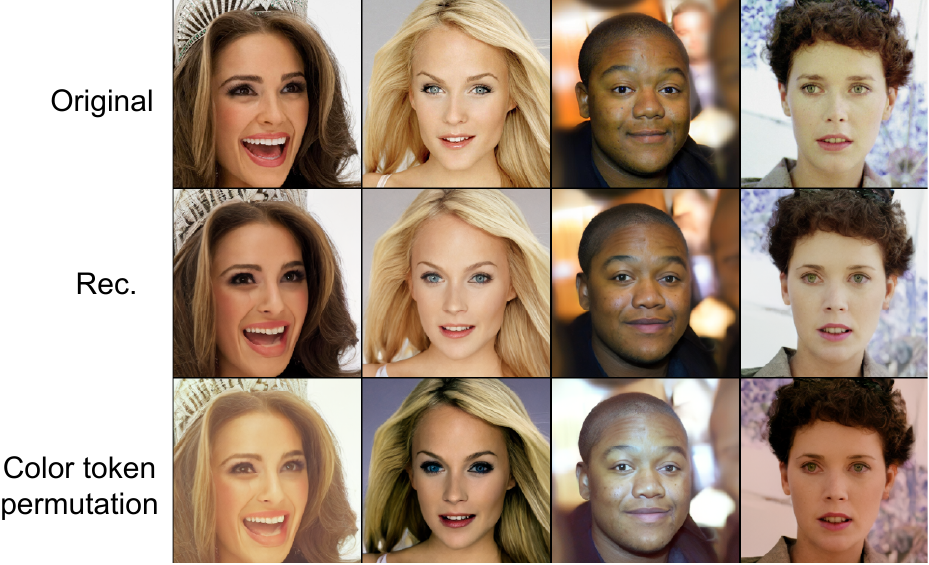}
    \caption{Effect of the color token. \textbf{Top row}: original images. \textbf{Middle row}: reconstruction using $T$ and $t_{col}$. \textbf{Bottom row}: we permute the color tokens $t_{col}$ between the 4 images and reconstruct from this modified conditioning. The color token $t_{col}$ contains color histogram information.}
    \label{fig:color_token_ablation}
\end{figure}

\subsection{Additional Quantitative Results for Conditioning}
\label{app:cond_quant}

\paragraph{ID-Sim implementation details.} The ID-Sim metrics correspond to the cosine similarity in a Face Recognition model's latent space between the original and the generated images. It aims at verifying that the generated face represents the same person as the original image. We use the latent space of an IResnet 50 \cite{he2016deep} trained on MSMV3 \cite{msmv3} with the ArcFace loss \cite{arcface_paper}. 

More precisely, we extract $Id_o$ and $Id_g$, the latent code for the original, and generate images which naturally lie on the unit hypersphere $S^{512}$. We then derive ID-Sim as follows: 
\begin{equation}
    \text{ID-Sim} = cos(Id_o, Id_g) = Id_o^T Id_g
\end{equation}

\paragraph{Reconstruction.} We present additional reconstruction metrics of the models trained on FFHQ and tested on CeleA-HQ with various ablations in Tab. \ref{tab:ffhq_comparison}. As we see, DINOv3 leads to the best reconstruction metrics due to its massive conditioning size, combined with DINO-based training which favors very structured outputs. There are promising existing approaches that use these DINO feature maps as the direct input to diffusion models \cite{zheng2025diffusion}. We instead focus on using these tokens  to condition diffusion through cross-attention.

Our cross query extractor module, despite learning to a much smaller conditioning space, still achieves competitive results, particularly in perceptual metrics such as LPIPS and DISTS. This seems to indicate that although very fine-graine textures (such as hair or grass) can be lost through the distilation process, most of the semantic information is still present in the $Z$-space.

We present reconstruction metrics of our model on the other datasets in Tab \ref{tab:reoncstruciton_datasets}, and study the effect of sampling steps in Tab.~\ref{tab:ablation_ffhq_cqc_steps}.

\begin{table*}[t]
    \centering
    \resizebox{\textwidth}{!}{
    \begin{tabular}{ccccccccccc}
    \toprule
       Encoder  &  Steps & Cond. Size  & PSNR$\uparrow$ &SSIM$\uparrow$ & LPIPS$\downarrow$ & DISTS$\downarrow$ & FID$\downarrow$ & FD-DINO$\downarrow$ & KID$\downarrow$ &ID-Sim$\uparrow$\\ \midrule
       DiffAE &20&512&19.09&0.530&0.209&0.173&14.28&32.89&0.013 &  0.407\\
       DiffAE &200&512& 18.81 & 0.507 & 0.196 & 0.161 & 4.76 & 28.70 & 0.003& 0.404 \\
       EncDiff & 20 &$32 \times 128$ & 17.03& 0.420 & 0.323  & 0.210  & 18.43 & 64.91 & 0.014& 0.313 \\
       CLIP & 20 & $198\times 128$&18.23 & 0.490 & 0.201&0.169 & 7.69 & 31.45& 0.006& 0.538 \\
       DINOv3 &20 &  $261 \times 128$ &23.31& 0.645 &  0.099& 0.114 & 5.66 &20.89 & 0.005& 0.609\\
       \midrule
       \textbf{CQ} &20 & $32 \times 128$ & 21.76 & 0.589 &  0.124& 0.131 & 5.59 & 22.39 & 0.005&0.515 \\
       \textbf{CQC} & 20 &  $32 \times 128$ & 22.57 & 0.611 & 0.114 & 0.118 & 5.52 & 22.28 & 0.005&  0.523\\
       \bottomrule
    \end{tabular}
    }
    \caption{Additional reconstruction results across various setups. Training is done on the train split of FFHQ and evaluated on CelebA-HQ in $256\times 256$. Sampling is performed from the same random noise. \textbf{CQ}: Cross-Query without color token; \textbf{CQC}: Cross-Query with Color token.}
    \label{tab:ffhq_comparison}
\end{table*}

\begin{table*}[t]
    \centering
    \resizebox{\textwidth}{!}{
    \begin{tabular}{cccccccccc}
    \toprule
       Encoder  &  Steps & Cond. Size  & PSNR$\uparrow$ &SSIM$\uparrow$ & LPIPS$\downarrow$ & DISTS$\downarrow$ & FID$\downarrow$ & FD-DINO$\downarrow$ & KID$\downarrow$ \\ \midrule
       \textbf{CQC} & 10 & $32 \times 128$ & 22.79 & 0.623 & 0.120 & 0.122& 8.36& 27.85 & 0.008 \\
       \textbf{CQC} & 20 &  $32 \times 128$ & 22.57 & 0.611 & 0.114 & 0.118 & 5.52 & 22.28 & 0.005 \\
       \textbf{CQC} & 50 & $32 \times 128$ & 22.41 & 0.603 & 0.113& 0.118& 4.42& 19.88 & 0.003  \\
       \bottomrule
    \end{tabular}
    }
    \caption{Effect of the number of sampling steps on our cross query encoder reconstruction performance.}
    \label{tab:ablation_ffhq_cqc_steps}
\end{table*}

\begin{table*}[t]
    \centering
    \resizebox{\textwidth}{!}{
    \begin{tabular}{cccccccc}
    \toprule
       Dataset  &  Sampling Steps & Conditioning Size  &SSIM$\uparrow$ & LPIPS$\downarrow$ & DISTS$\downarrow$ & FID$\downarrow$ & FD-DINO$\downarrow$ \\\midrule
       FFHQ &20&$32 \times 128$ & 0.585 & 0.120 & 0.127 & 6.18 & 16.18 \\
       AFHQ & 20 & $32 \times 128$ & 0.433 & 0.155& 0.139& 8.13 &15.91\\
       BraTS & 20 & $32 \times 128$ & 0.649 & 0.151 & 0.156& 31.54& 35.09 \\
       \bottomrule
    \end{tabular}
    }
    \caption{Additional reconstruction metrics of our CrossQuery+Color Encoder on different datasets.}
    \label{tab:reoncstruciton_datasets}
\end{table*}

\section{Implementation Details for Separating Network}
\label{app:separation_details}
\subsection{Architecture}

The separator network $E_{\theta_{E}}$ is designed to process the conditioning token sequence $Z$ and perform a structured decomposition into salient $\hat{Z}_{S}$ and common $\hat{Z}_{C}$ components.
\paragraph{Transformer Backbone.} 
We implement $E_{\theta_{E}}$ as a 5-layer Transformer encoder. Each layer consists of a multi-head self-attention (MHSA) block with 12 heads and a feed-forward network (FFN) with a hidden dimension of 512, following the standard Transformer architecture. The input sequence $Z$ consists of $K=32$ tokens of dimension $d=128$. No positional encoding is needed, since the conditioning DINOv3 tokens were already processed with their own positional encoding.

\paragraph{Common-Salient Head.}
Following the setup in Fig.~\ref{fig:combined_pipeline_and_query}, we prepend a learnable \texttt{CLS} token to the input sequence $Z$. After processing through the Transformer blocks, the state of the \texttt{CLS} token is passed through a linear projection layer to produce the salient code $\hat{Z}_{S} \in \mathbb{R}^{d}$. The remaining $K$ output tokens are treated as the common code $\hat{Z}_{C} \in \mathbb{R}^{K \times d}$. This design ensures that the salient information is distilled into a compact representation while the common information preserves spatial and semantic context across the original token sequence length.

\paragraph{Additive Constraint.}
The final conditioning fed to the generator $G_{\psi}$ is reconstructed as $\hat{Z} = \hat{Z}_{S} + \hat{Z}_{C}$. This additive structure is strictly enforced by the reconstruction loss $\mathcal{L}_{rec}$.

\subsection{Training setup}
\label{app:training_setup_2}
\paragraph{Weak Supervision and Binarization.}
Our framework relies on weak binary supervision $Y \in \{0, 1\}$ to separate common from salient factors. Since many benchmark datasets contain continuous or multi-class labels, we apply the following binarization protocols for training:

\begin{itemize}
    \item \textbf{FFHQ (Glasses):} We define the background ($Y=0$) as images with the "No Glasses" label and the target ($Y=1$) as any image containing eyewear. For evaluation, we utilize held-out fine-grained labels (Reading Glasses, Sunglasses) to assess latent structure discovery.
    \item \textbf{FFHQ (Headpose):} Samples are binarized based on the yaw angle $\theta$. We define $Y=0$ (Neutral) if $|\theta| \le 8^\circ$ and $Y=1$ (Turned) if $|\theta| > 12^\circ$. This gap ensures a clear separation between distributions during training. Evaluation is performed by measuring the Mean Absolute Error (MAE) of yaw predicted by a separate regressor on swapped images.
    \item \textbf{FFHQ (Smile):} We apply a threshold of $0.5$ on the smiling attribute. We acknowledge significant label noise in this attribute and thus restrict our analysis to qualitative results.
    \item \textbf{AFHQ (Species):} We utilize a binary Cat vs. Dog setup for training to treat species-specific traits as the salient factor.
    \item \textbf{BraTS 2023 (Tumor):} To ensure high-quality training signals in medical data, we restrict our volume to axial slices between indices 84 and 100. A slice is labeled $Y=1$ (Positive) if the segmentation mask contains $>500$ tumorous pixels; otherwise, it is $Y=0$.
\end{itemize}

\paragraph{Preprocessing.}
We measure reconstruction and swapping accuracies during training. For efficiency, we evaluate the accuracy using a classifier trained on the conditioning space. 

\paragraph{Adversarial training.}
Instead of relying on a classical adversarial setup using GAN-like adversarial losses, we opt for a Gradient Reversal Layer training procedure:
\begin{itemize}
    \item \textbf{Forward pass}: The encoder $E_{\theta_E}$ produces samples $(\hat{Z}_S, \hat{Z}_C)$. A small (2-layer MLP) adversary $D_{avd}$ takes $\hat{Z}_C$ and aims at predicting the label $\hat{y}$ associated with this sample, and a standard binary cross-entropy loss (BCE) is computed between $y$ and $\hat{y}$.
    \item \textbf{Backward pass}: when gradients flow back from $D_{adv}$ to $E_{\theta_{E}}$, the gradients are multiplied by $-\lambda_{adv}$
\end{itemize}
The negative weighting of $\lambda_{adv}$ effectively allows the gradients that flow back to the encoder to \textit{maximize} the BCE loss outputted by $D_{adv}$, effectively emulating an adversarial training, with only one scalar $\lambda_{adv}$ to tune. Instead of relying on a single value of $\lambda_{adv}$, we opt for a \textbf{self tuning adversarial mechansim}, described thereafter.
\begin{figure}
    \centering
    \includegraphics[width=0.98\linewidth]{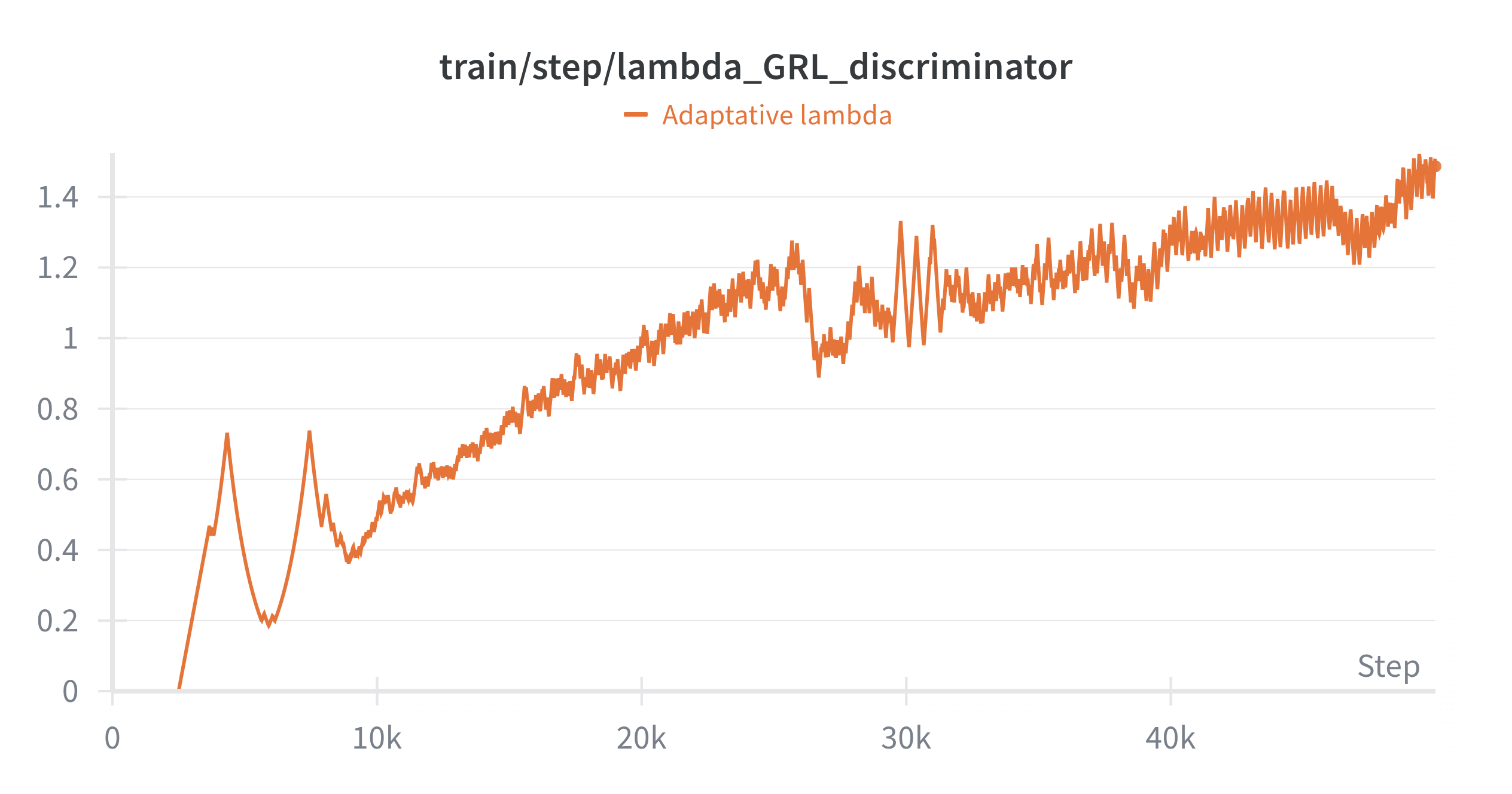}
    \caption{Training dynamics of the self-tuning GRL parameter $\lambda_{adv}$. Its goal is to dynamically maintain the discriminator accuracy at 0.80.}
    \label{fig:lambda_grl}
\end{figure}
\begin{figure}
    \centering
    \includegraphics[width=0.98\linewidth]{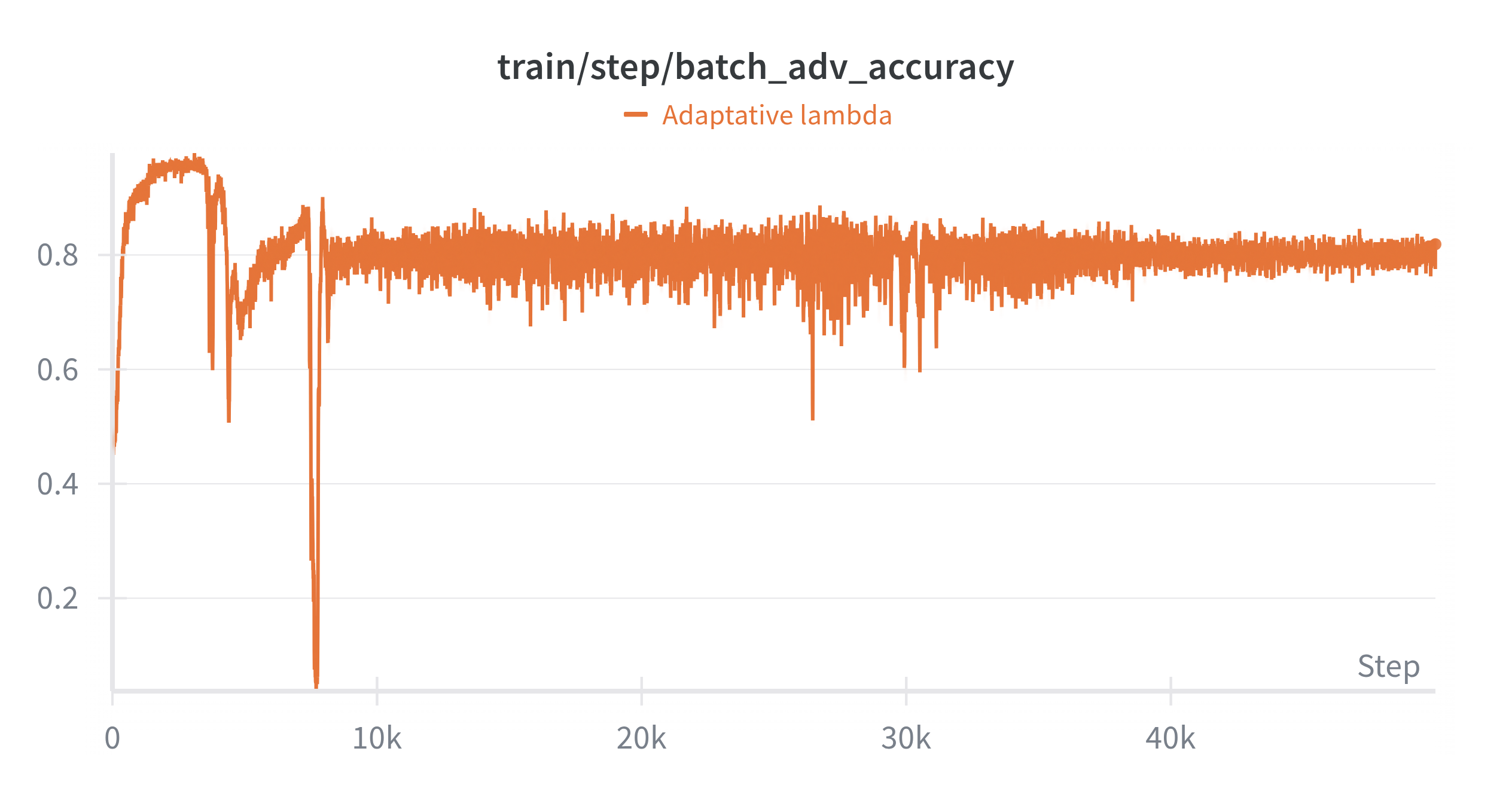}
    \caption{Training adversarial accuracy using our self-tuning GRL schedule. After a chaotic warmup phase, accuracy stabilizes at 0.80.}
    \label{fig:lambda_grl_acc}
\end{figure}
\paragraph{Self-tuning adversary.}
At each training step, a rolling accuracy $Acc$ is computed on the current batch (with a small EMA decay of 0.95). The goal of the adversary is to maximize the accuracy, while the goal of the encoder is to minimize it. During various stages of training, the power of both networks to achieve their respective tasks can vary greatly. Therefore, we establish a target accuracy $Acc_{target}$. At each step:
\begin{itemize}
    \item If $Acc > Acc_{target}$: The adversary is too strong, and we multiply $\lambda_{adv}$ by 0.999.
    \item If $Acc < Acc_{target}$: The encoder is too strong, and we devide $\lambda_{adv}$ by 0.999.
\end{itemize}
This basic controller for $\lambda_{adv}$ was sufficient to keep a smooth training for target accuracies $Acc_{target}$ between 0.70 and 0.80, with $\lambda_{adv}$ varying between $0$ and $5$.
We display an example of the evolution of $\lambda_{adv}$ in Fig.~\ref{fig:lambda_grl} and the associated accuracy evolution in Fig.~\ref{fig:lambda_grl_acc}.
\begin{table*}[t]
\centering
\caption{Detailed Swapping Results on FFHQ (Glasses Attribute). We evaluate three metrics across all six possible attribute transfers between No Glasses (N), Reading Glasses (R), and Sunglasses (S). Models were trained only on binary supervision (N=0; R=1, S=1) with 60,000 samples. Evaluation is performed with 1,000 swapped pairs for each fine-grained class.}
\label{tab:full_swapping_results}
\begin{small}
\begin{tabularx}{\textwidth}{l l @{\extracolsep{\fill}} cccccc | c}
\toprule
& & \multicolumn{2}{c}{\textbf{From No Glasses}} & \multicolumn{2}{c}{\textbf{From Reading}} & \multicolumn{2}{c}{\textbf{From Sun}} & \textbf{Aggregate} \\
\cmidrule(lr){3-4} \cmidrule(lr){5-6} \cmidrule(lr){7-8} \cmidrule(lr){9-9}
\textbf{Method} & \textbf{Metric} & $N \to R$ & $N \to S$ & $R \to N$ & $R \to S$ & $S \to N$ & $S \to R$ & \textbf{Score} \\
\midrule
MM-cVAE & Acc $\uparrow$ & 15.6 & 27.1 & \textbf{1.00} & 26.0 & \textbf{1.00} & 0.00 & 11.78 \\
 & ID-Sim\textsubscript{rec} $\uparrow$ & \textbf{.592} & .394 & .591 & .465 & .390 & .390 & .470 \\
& ID-Sim\textsubscript{real} $\uparrow$ & .302 & .298 & .302 & .309 & .300 & .300 & .302 \\
\midrule
SepVAE & Acc $\uparrow$ & 31.1 & 39.4 & \textbf{1.00} & 42.3 & \textbf{1.00} & 0.00 & 19.58 \\
 & ID-Sim\textsubscript{real} $\uparrow$ &  .300 & .298 & .300 &  .306 & .301 & .301 & .301  \\
\midrule
D.InfoGAN & Acc $\uparrow$ & 87.8 & 47.5 & 99.8 & 46.1 & \textbf{1.00} & 0.00 & 47.03 \\
 & ID-Sim\textsubscript{rec} $\uparrow$ & .412 & .415 & .424 & .425 & .422 & .422 & .413 \\
& ID-Sim\textsubscript{real} $\uparrow$ & .311 & .310 & .318 & .320 & .312 & .312 & .314  \\
\midrule
 \textbf{Diff-CA} & Acc $\uparrow$ & \textbf{91.4} & \textbf{94.2} & 98.8 & \textbf{91.5} & 98.7 & \textbf{92.9} & \textbf{94.48}\textsuperscript{\textcolor{green}{(+47.18)}} \\
 \textbf{(Ours)} & ID-Sim\textsubscript{rec} $\uparrow$ &  0.507 & \textbf{.458} & \textbf{.510} & \textbf{.492} & \textbf{.427} & \textbf{.453} & \textbf{.474}\textsuperscript{\textcolor{green}{(+.004)}} \\
 & ID-Sim\textsubscript{real} $\uparrow$ &  \textbf{0.468} & \textbf{.431} & \textbf{.463} & \textbf{.452} & \textbf{.398} & \textbf{.416} & \textbf{.438}\textsuperscript{\textcolor{green}{(+.124)}} \\
\bottomrule
\end{tabularx}
\end{small}
\begin{flushleft}
\footnotesize{Notes: \textbf{Acc} denotes top-1 swapping accuracy of a fine-grained classifier. \textbf{ID-Sim} is the ArcFace cosine similarity between non-swapped and swapped identities: we compute it between the swappings and the non-swapped reconstructions of the model, ID-Sim\textsubscript{rec};and between the swapping and the original images, ID-Sim\textsubscript{real}. \textbf{KID} is the Kernel Inception Distance. }
\end{flushleft}
\end{table*}
\begin{figure}
    \centering
    \includegraphics[width=0.98\linewidth]{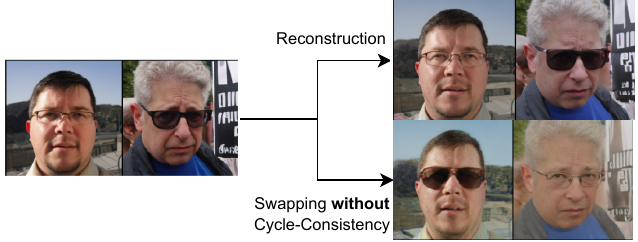}
    \caption{Example of swapping with a model trained \textbf{without} cycle consistency. Swapping $\hat{Z}_S$ leads to major color changes from the reconstruction, indicating a hidden dependance between $\hat{Z}_C$ and $\hat{Z}_C$}
    \label{fig:without_cycle}
\end{figure}
\paragraph{Swaping Cycle-Consistency.}
When trained only using reconstruction loss $\mathcal{L}_{rec}$, adversarial loss $\mathcal{L}_{adv}$, and pinning loss $\mathcal{L}_{pin}$, we found that the model had good reconstruction performances, and that the salient $\hat{Z}_S$ had correct classification accuracy. However, swapping often lead to unexpected changes in the images, such as color shifts and other attributes inconsistencies (see Fig.~\ref{fig:without_cycle}). We hypothesize this happens because $\mathcal{L}_{adv}$ only enforces $\hat{Z}_C \indep Y$ and not $\hat{Z}_C \indep \hat{Z_S}$, which a somewhat correct but imperfect approximation. Therefore, in this case conditions for Th. \ref{sec:identifiability} are not met, and we observe a "leakage" from context to salient. 

Cycle consistency loss aims to correct this. Intuitively, we want swapped counterfactuals (samples of the form ${Z_{mix} =\hat{Z}_S^a + \hat{Z}_C^b}$ with $\hat{Z}_S^a$ and $\hat{Z}_C^b$ from different images $a$ and $b$) to be "valid" samples. In other terms, intuitively we want:
\begin{equation}
\label{eq:counter_equality}
\tag{H\textsubscript{mix}}
Z_{mix} \overset{Law}{=} Z
\end{equation}
which is the modeling assumption used in Sec.~\ref{app:cycle}. Importantly, this condition is not enforced by an explicit loss, but rather serves as an intuitive guideline motivating the design of the cycle-consistency objective.

While one could in principle enforce~\eqref{eq:counter_equality} using an adversarial discriminator (similarly to how $\mathcal{L}_{adv}$ enforces $\hat{Z}_C \indep Y$), we found such approaches to be unstable in practice. Instead, we rely on the separator $E_{\theta_E}$ itself to act as a weak, implicit discriminator. Specifically, we maintain an exponential moving average (EMA) copy of $E_{\theta_E}$, which is used to re-encode swapped counterfactuals $Z_{\mathrm{mix}}$. If $Z_{\mathrm{mix}}$ does not resemble a valid sample from the true conditioning distribution, the frozen EMA encoder fails to satisfy the cycle-consistency criterion. This mechanism limits co-adaptation between the representation $(\hat{Z}_S, \hat{Z}_C)$ and the separator, and should be understood as a heuristic regularization strategy rather than a formal guarantee of distributional matching or independence. This design is conceptually related to self-distillation methods such as BYOL~\cite{grill2020bootstrap}, where an EMA target network prevents collapse toward trivial solutions.

\paragraph{Optimization and Hyperparameters.}
We jointly optimize the separator and adversarial discriminator for 1000 epochs using the AdamW optimizer with a batch size of 1024. Training was performed on a single NVIDIA A100 GPU, equipped with an AMD EPYC 7543 processor with 16 active cores (workers). Training takes about 3-4 hours on this configuration.

For the pinning loss $\mathcal{L}_{pin}$, we use a weighting of 1 on the negative classes (to force all background salients to colapse at 0), and a weighting of 0.1 on the positive classes. The cycle consistency loss weighting was set at 1.

\begin{table}[h!]
    \centering
    \caption{Separator network specifications.}
    \label{tab:separator_specifications}
    \begin{tabular}{@{}cc@{}}
    \toprule
    Layers & 5 \\
    Output token dim & 128 \\
    Input embedding dim & 768 \\
    Latent dimension ($d$) & 512 \\
    Depth & 4 \\
    Attention heads & 12 \\
    Head dimension & 64 \\
    Feedforward multiplier & 4 \\
    Latent initialization & Learnable queries \\
    \midrule
    \# Params &  15M \\
    Compute & 2.3 GFlops\\
    \midrule  
    Batch Size & 64 \\
    Optimizer & AdamW \\
    Learning Rate &1e-4 \\
    \bottomrule
    \end{tabular}
\end{table}

\section{Additional Results for Common-Salient Editing}
\paragraph{Detailed performance for glasses.}
\begin{table}[ht]
\centering
\caption{Detailed Head Pose Swapping Results. Accuracy measures correct orientation (Left, Straight, Right) identification, while MAE measures precision in degrees. Trained only on binary Neutral ($y=0$) vs. Turned ($y=1$) supervision.}
\label{tab:headpose_metrics}

\begin{tabular}{lcc}
\toprule
\textbf{Swap Operation} & \textbf{Accuracy} $\uparrow$ & \textbf{Yaw MAE ($^\circ$)} $\downarrow$ \\
\midrule
Add Left ($N \to L$) & 84.5 & 5.5807 \\
Add Right ($N \to R$) & 84.0 & 5.6945 \\
Remove Left ($L \to N$) & 98.0 & 3.0699 \\
Remove Right ($R \to N$) & 98.2 & 2.8752 \\
Right $\to$ Left & 88.7 & 5.5692 \\
Left $\to$ Right & 83.5 & 5.7251 \\
\midrule
\textbf{Macro-Average} & \textbf{0.8948} & \textbf{4.7524} \\
\bottomrule
\end{tabular}
\end{table}

We present the detailed swapping performance, in terms of swapping accuracy and identity perseveration in Tab.~\ref{tab:full_swapping_results}. Diff-CA achieves substantial better results in all swappings cases.

\paragraph{Detailed performance on Head Position.} We train a small regressor MLP to predict the \texttt{yaw} attribute from our images. This classifier achieves a MAE of 2,55° . We train our separator network on the binary \texttt{Neutral / Non-Neutral} attribute, then evaluate it by performing swaps between the fine-grained labels \texttt{Neutral / Left /Right}. We then evaluate classification accuracy of the swap, both in terms of discrete labels as well as with the predicted yaw value. Results are reported in Tab.~\ref{tab:headpose_metrics}.

\paragraph{PCA and UMAP of the learned salient factor.} Fig.~\ref{fig:projection} displays projections of the learned $\hat{Z}_S$. As we see, dispite being learned solely using binary labels (Glasses vs. No Glasses), the space of $\hat{Z}_S$ is organized according to finer-grained sublabels. This can be seen as an instance of concept discovery using CA.

\clearpage
\begin{figure*}
    \centering
\includegraphics[width=0.95\linewidth]{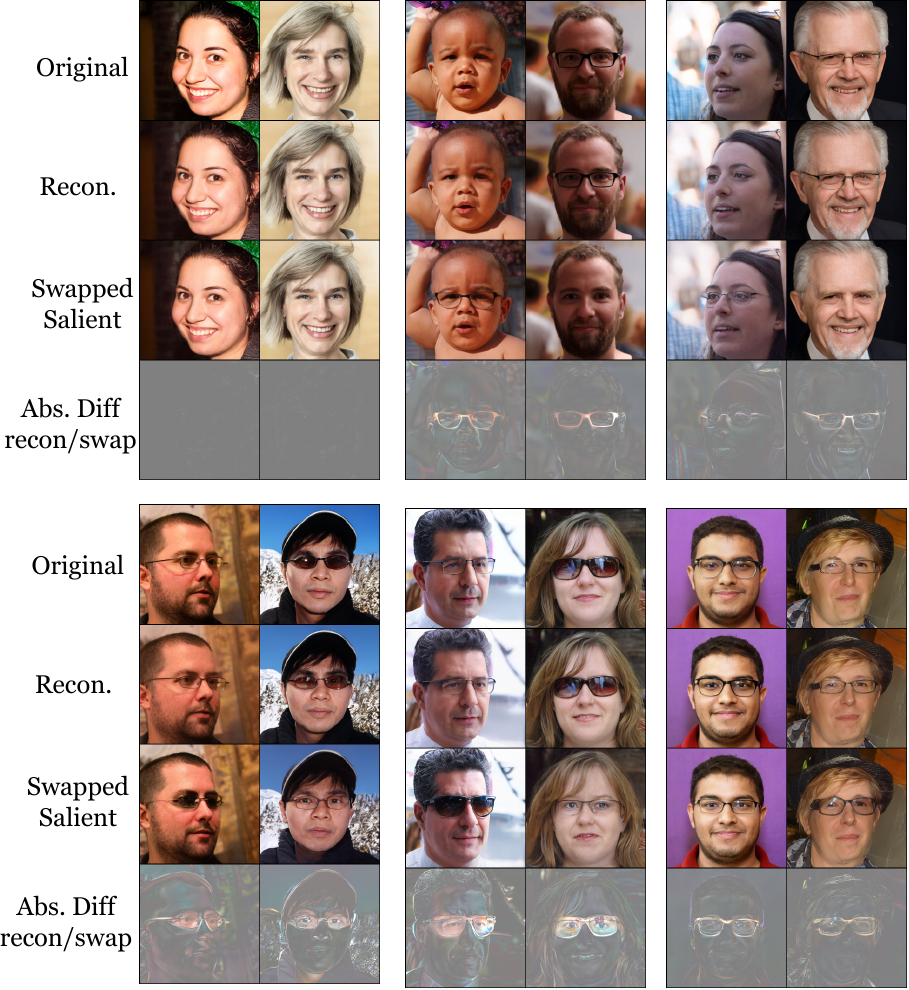}
    \caption{Additional swapping results on glasses in FFHQ using Diff-CA.}
    \label{fig:app_swap_glasses}

\end{figure*}

\begin{figure*}
    \centering
\includegraphics[width=0.95\linewidth]{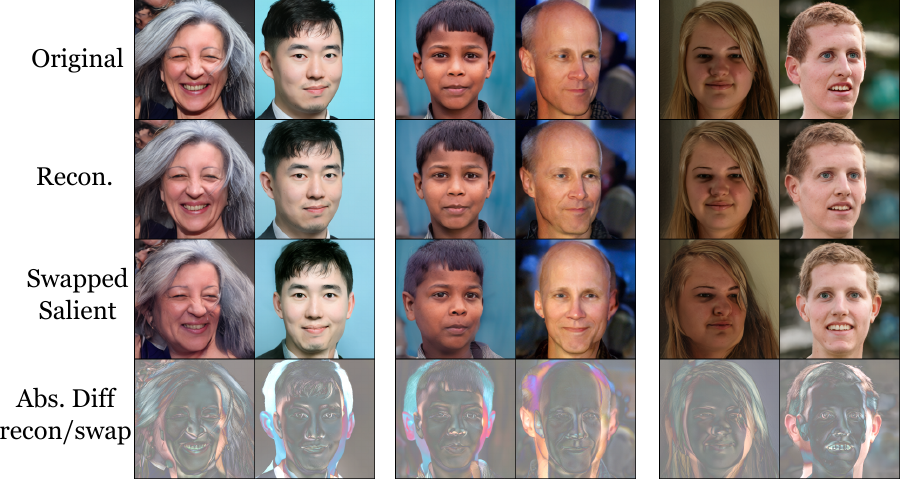}
    \caption{Additional swapping results on the head position in FFHQ using Diff-CA.}
    \label{fig:app_swap_pose}

\end{figure*}

\begin{figure*}
    \centering
\includegraphics[width=0.95\linewidth]{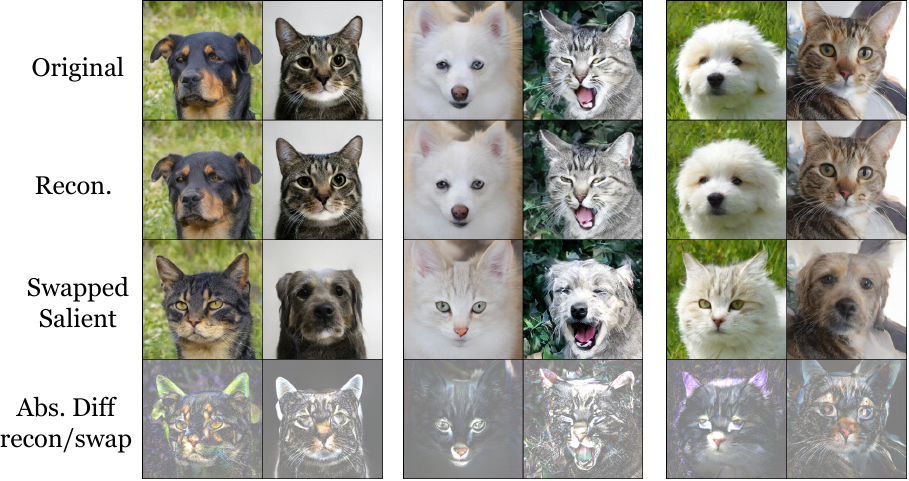}
    \caption{Additional swapping results on the cats/togs classes in AFHQ using Diff-CA.}
    \label{fig:app_swap_afhq}

\end{figure*}

\begin{figure*}
    \centering
\includegraphics[width=0.95\linewidth]{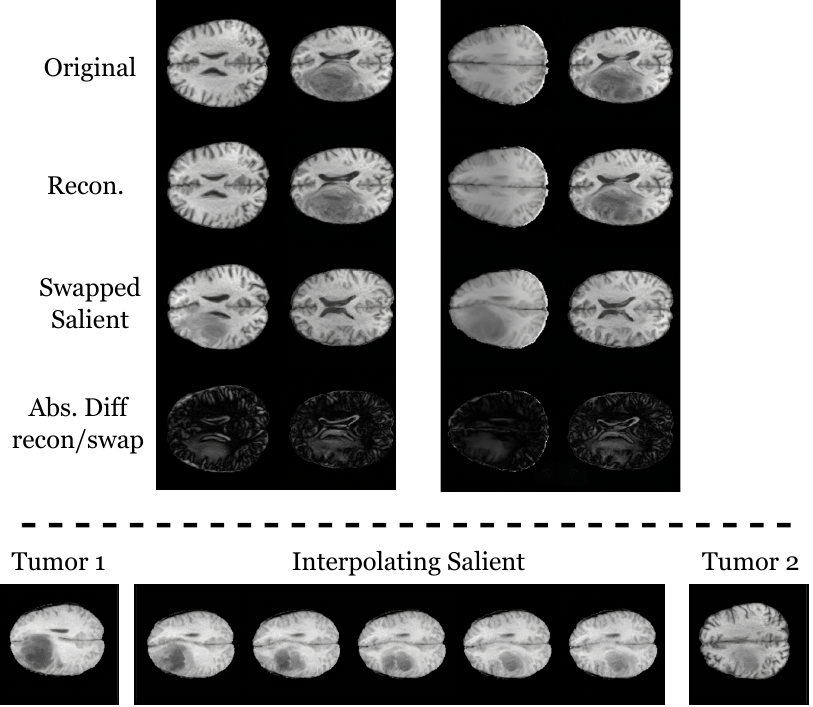}
    \caption{Additional swapping and interpolation results on the BraTS 2023 dataset using Diff-CA.}
    \label{fig:app_swap_brats}

\end{figure*}






\end{document}